\documentclass[runningheads]{llncs}
\usepackage[T1]{fontenc}
\usepackage{graphicx}
\usepackage{booktabs}
\usepackage{multirow}
\usepackage{wrapfig}
\usepackage{amsmath}
\usepackage{amssymb}
\usepackage{amsfonts}
\usepackage{mathtools}
\usepackage{subfig}
\usepackage[section]{placeins}
\usepackage{enumitem}
\usepackage{xspace}

\begin{document}
\title{Data-Efficient Agentic Graph Domain Adaptation via Reliability-Aware Prototype Learning}

\newcommand{\method}{\textsc{DEAG}}

\titlerunning{Data-Efficient Agentic Graph Domain Adaptation}

\author{
Yingxu Wang\inst{1}\textsuperscript{\textdagger}
\and
Kunyu Zhang\inst{2}\textsuperscript{\textdagger}
\and
Siyang Gao\inst{3}\thanks{Corresponding author.}
}

\authorrunning{Y. Wang et al.}

\institute{
Mohamed bin Zayed University of Artificial Intelligence, Abu Dhabi, UAE\\
\email{yingxv.wang@gmail.com}
\and
Zhengzhou University, Zhengzhou, China\\
\email{kunyu.zky@gmail.com}
\and
City University of Hong Kong, Hong Kong SAR, China\\
\email{siyangao@cityu.edu.hk}
}

\maketitle

\begingroup
\renewcommand{\thefootnote}{\textdagger}
\footnotetext{Yingxu Wang and Kunyu Zhang contributed equally to this work.}
\endgroup
\begin{abstract}
Agentic learning systems are often required to adapt after deployment by observing new data and reusing prior knowledge under limited supervision or feedback. For graph-structured prediction, Graph Domain Adaptation (GDA) naturally instantiates this setting by transferring knowledge from labeled source graphs to unlabeled target graphs under distribution shifts. However, most GDA methods assume sufficient labeled source graphs, which becomes restrictive in data-efficient agentic settings where only limited source evidence can be retained. Under such constraints, source semantics become unreliable, leading to unstable source anchoring, uncertain target association, and fragile target-marginal calibration. To address these challenges, we propose \method{}, a reliability-aware prototype learning framework for data-efficient agentic GDA. \method{} estimates class reliability from retained source support and embedding compactness, and constructs stable reusable source anchors by blending empirical prototypes with classifier directions. Guided by these anchors, \method{} performs prototype-aware soft target association and aligns confidence-weighted target centers with source semantics. A source-prior regularizer further sharpens target predictions while keeping the target marginal consistent with retained source evidence. Experiments on graph benchmarks with diverse domain shifts show that \method{} improves average adaptation performance over competitive GDA baselines under the same source-data budget.\\
\keywords{Data-Efficient Agentic Learning \and Graph Neural Networks \and Graph Domain Adaptation.}
\end{abstract}

\section{Introduction}

Agentic learning systems are increasingly expected to adapt after deployment by observing new data and reusing prior knowledge under limited supervision or feedback~\cite{hu2025automated,collaco2026role,wang2026sgac}. For graph-structured prediction, Graph Domain Adaptation (GDA) provides a natural formulation of this setting: it transfers predictive knowledge from labeled source graphs to unlabeled target graphs under distribution shifts~\cite{yin2022deal,yin2023coco,hadipour2025graphban}, which enables graph neural networks (GNNs) to generalize beyond the environments where annotations are originally collected~\cite{kipf2016semi,xu2018powerful,lei2025gradual,yang2025graphlora}. Such transferability is valuable in molecular property prediction across chemical libraries and social network mining across platforms~\cite{xu2018powerful,wang2026nested,fang2025homophily,wang2025protomol}.

Existing GDA methods mainly improve transferability by reducing source-target discrepancy in graph representation space~\cite{ganin2016domain,long2015learning,zhang2026brainriem}. Representative approaches adopt adversarial or discrepancy-based alignment, contrastive cross-domain learning, and graph-specific regularization~\cite{ngo2025higda,zeng2025pave,yin2025dream}. Other studies further exploit graph-structured inductive biases through topology-aware constraints, graph reweighting, propagation calibration, or target self-training with structural consistency~\cite{liu2023structural,liu2024rethinking,qiao2023semi}. Despite these advances, most methods are developed in a source-sufficient regime, where the labeled source collection is assumed to be fully accessible during training~\cite{wang2025correction,chen2025smoothness,wang2026disrfm}. This assumption is restrictive for data-efficient agentic adaptation, where a learner may need to update its predictor after deployment using only limited retained source evidence and unlabeled target observations. In practical graph learning pipelines, source graphs may also be constrained by storage budgets, preprocessing overhead, sampling restrictions, or repeated deployment costs~\cite{liang2020we,wang2025dusego,wang2026cross}. This raises an important question: \textit{how can an adaptive graph learner reliably reuse limited source evidence to adapt to unlabeled target graphs under distribution shift?}

Reducing the source-data budget improves practical efficiency, but it also weakens the transfer signals on which GDA relies. In the limited-source regime, directly applying existing GDA objectives leaves three challenges unresolved. \textbf{(1) Unreliable source anchoring.} Alignment-based GDA requires source supervision to provide stable class semantics~\cite{yin2022deal,yin2023coco,yin2025dream}. With limited source graphs, empirical class prototypes can be sensitive to sampling noise, while classifier directions learned from the same reduced data may also be biased~\cite{liang2020we,qiao2023semi}. An adaptive learner should therefore assess the reliability of retained source evidence and combine instance-level prototype information with classifier-level decision information accordingly. \textbf{(2) Uncertain target association.} Since target labels are unavailable, target graphs must be associated with source semantics through model predictions and cross-domain similarity~\cite{yin2022deal,qiao2023semi,liu2024rethinking}. Under weakened source supervision, early predictions are unstable, and hard pseudo-label training can amplify incorrect assignments~\cite{qiao2023semi,liang2020we,wang2024degree}. Target association should therefore remain soft, incorporate source-anchor compatibility, and downweight ambiguous target graphs, so that target observations can be cautiously grounded in reusable source semantics. \textbf{(3) Fragile target-marginal calibration.} Confidence regularization sharpens target predictions, but a weak source model may over-concentrate target samples into a few initially favored classes~\cite{liang2020we,garg2023rlsbench}. Conversely, enforcing a uniform target marginal can conflict with the class evidence retained from limited source data~\cite{garg2023rlsbench,liu2024pairwise}. A robust regularizer should calibrate the target marginal using the available source prior while preserving confident sample-level predictions, thereby reducing biased prediction updates before reliable target feedback is available.

In this paper, we propose \method{}, a reliability-aware prototype learning framework for data-efficient agentic GDA. Rather than directly adapting with weakened source supervision, \method{} stabilizes reusable source-side guidance and supports self-guided adaptation under limited source evidence. It estimates class reliability from retained source support and representation compactness, then constructs reliability-aware anchors by blending empirical source prototypes with classifier directions. Guided by these anchors, \method{} performs prototype-aware soft target association by combining model predictions and target-to-anchor similarities, allowing unlabeled target observations to be cautiously associated with reusable source semantics. The resulting confidence-weighted target centers are aligned with the corresponding source anchors, avoiding hard pseudo-label cross-entropy. Finally, \method{} introduces source-prior regularization to sharpen target predictions while preventing the target marginal from drifting away from the class distribution observed in the retained source data. Experiments on graph benchmarks with diverse domain shifts show that \method{} improves average adaptation performance over competitive GDA baselines under the same source-data budget.

Our contributions can be summarized as follows: (1) We investigate data-efficient agentic GDA under limited source evidence and identify three key challenges: unreliable source anchoring, uncertain target association, and fragile target-marginal calibration. (2) We propose \method{}, a compact reliability-aware prototype learning framework that stabilizes reusable source guidance, performs soft target alignment, and calibrates target predictions with the retained source prior. (3) Experiments on multiple graph benchmarks under diverse distribution shifts demonstrate that \method{} improves average adaptation performance over competitive baselines under the same source-data budget.
\vspace{-0.3cm}
\section{Related Work}

\noindent\textbf{{Data-Efficient Agentic Graph Learning.}}
Data-efficient agentic graph learning aims to train or adapt effective graph learners with limited training data, supervision, feedback, or accessible graph evidence~\cite{ju2024survey}. It is closely related to data-efficient graph learning, where existing studies broadly follow two directions~\cite{ju2024survey,hashemi2024comprehensive,gao2025graph}. The first direction improves label efficiency by leveraging unlabeled graphs, semi-supervised regularization, self-supervised objectives, or selective label acquisition, thereby reducing the dependence on dense annotations~\cite{ju2024survey,tian2025knowledge,hu2025large}. The second direction improves data efficiency by compressing the effective training set through coreset selection, dataset distillation, or graph condensation, where a compact set of real or synthetic graphs is used to approximate the learning effect of the original dataset~\cite{hashemi2024comprehensive,zhang2024navigating,wang2026usbd}. Despite their progress, these methods are mostly developed for single-domain graph learning and do not explicitly address agentic adaptation under source-target distribution shifts~\cite{yin2023coco,yu2025samgpt,ngo2025higda}. This distinction is important for GDA: when source evidence is limited, the retained source graphs not only provide supervised training signals but also define the reusable class semantics used to guide target-domain adaptation. A reduced source set may therefore yield noisy source prototypes, unstable target association, and miscalibrated target predictions~\cite{liang2020we,garg2023rlsbench,yin2025dream}. In contrast, \method{} studies data-efficient agentic graph domain adaptation under limited retained source evidence. It supports self-guided adaptation by estimating source prototype reliability, associating unlabeled target graphs through prototype-aware soft alignment, and calibrating target predictions using the class prior observed from the retained source data.

\begin{figure}[t]
    \centering
    \includegraphics[width=1.0\linewidth]{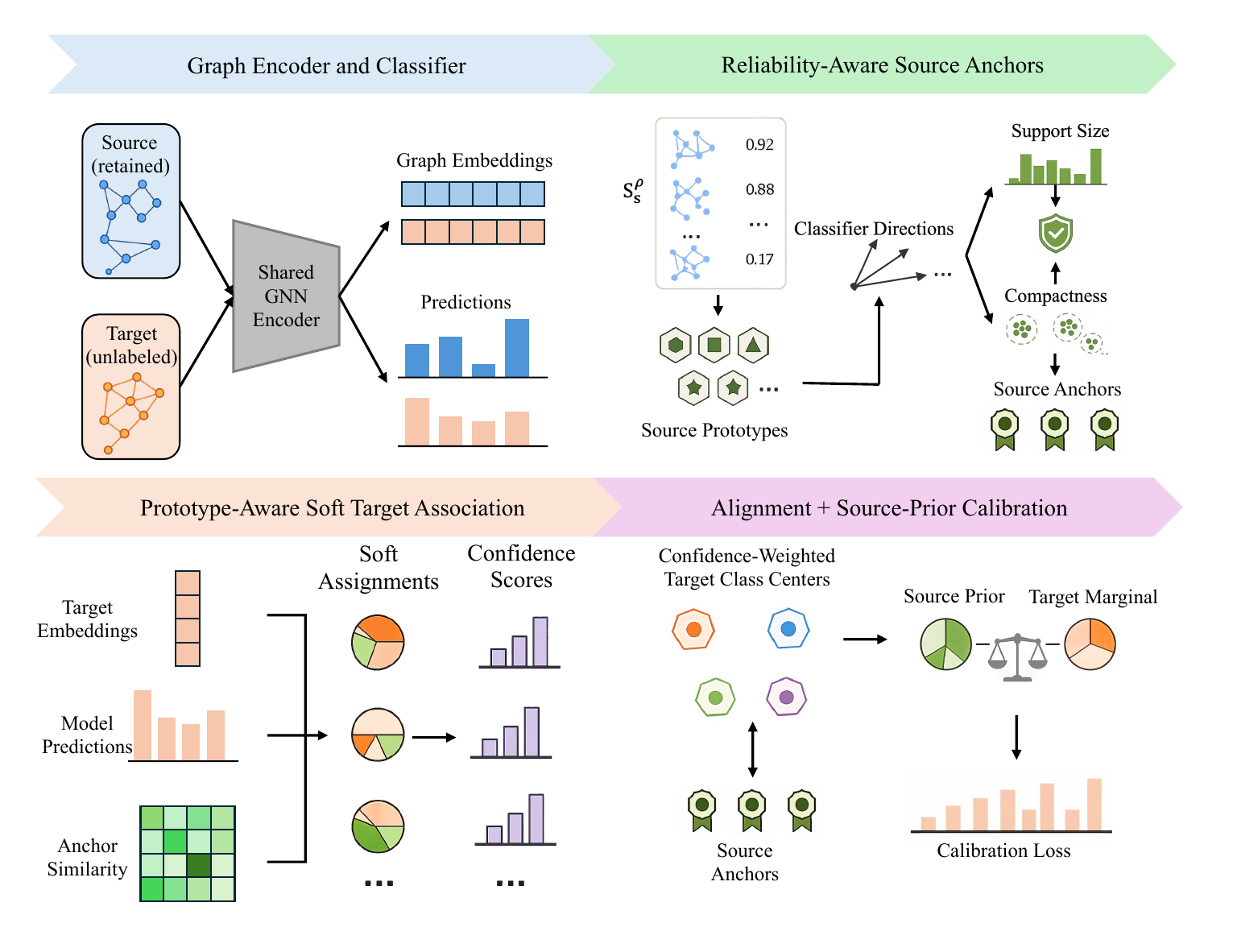}
    \vspace{-0.7cm}
    \caption{Overview of the proposed \method{}. \method{} estimates source-class reliability using support size and embedding compactness, constructs reusable source anchors by blending source prototypes with classifier directions, softly associates target graphs with these anchors, and jointly optimizes prototype alignment and source-prior target-marginal calibration.}
    \label{fig:overview}
    \vspace{-0.6cm}
\end{figure}

\noindent\textbf{{Graph Domain Adaptation (GDA).}}
GDA transfers predictive knowledge from labeled source graphs to unlabeled target graphs under distribution shifts~\cite{you2023graph,cai2024graph,dai2022graph}. Existing GDA methods mainly improve transferability by reducing source-target mismatch in the learned representation space~\cite{wang2025bridging,long2015learning,zhang2024anchor}. Representative approaches employ adversarial objectives, distribution alignment, contrastive learning, or variational modeling to learn domain-adaptive graph representations~\cite{zhang2026survey,li2023sigma}. Since graph data contain both node attributes and relational structures, later studies further incorporate graph-specific inductive biases, including topology-aware regularization, structural reweighting, propagation calibration, and graph-constrained target self-training, to better handle attribute and structural shifts during adaptation~\cite{liu2023structural,liu2024rethinking,chen2025smoothness}. Despite their effectiveness, most existing GDA methods assume that sufficient labeled source graphs are available during training, so that source classifiers, class prototypes, and alignment signals can be estimated reliably~\cite{yin2022deal,yin2025dream}. This source-sufficient assumption becomes fragile when the source-data budget is constrained, especially when an adaptive learner must rely on limited retained source evidence for post-deployment adaptation~\cite{liang2020we}. With fewer source graphs, empirical class prototypes may be biased by sampling noise, target predictions become less reliable, and hard pseudo-labeling or unconstrained alignment can amplify early errors~\cite{qiao2023semi,liang2020we,garg2023rlsbench}. To address this limitation, \method{} learns reliability-aware source anchors by fusing source prototypes with classifier directions, and uses them for soft target association and source-prior target-marginal calibration, enabling self-guided adaptation under limited source evidence.
\section{Methodology}

\subsection{Problem Formulation}

We study data-efficient agentic Graph Domain Adaptation (GDA) under limited retained source evidence. Let $\mathcal{D}_S=\{(\mathcal{G}_i^s,y_i^s)\}_{i=1}^{N_S}$ be the labeled source domain and $\mathcal{D}_T=\{\mathcal{G}_j^t\}_{j=1}^{N_T}$ be the unlabeled target domain, where each graph is $\mathcal{G}=(\mathcal{V},\mathcal{E},\mathbf{X})$ and $y_i^s\in\mathcal{Y}=\{1,\ldots,C\}$. Given a source-data ratio $\rho\in(0,1]$, only a class-stratified retained subset $\mathcal{S}_S^\rho=\bigcup_{c=1}^{C}\mathcal{S}_{S,c}^\rho$ is accessible for training, where $\mathcal{S}_{S,c}^\rho\subseteq\mathcal{D}_{S,c}$, $\mathcal{D}_{S,c}=\{(\mathcal{G}_i^s,y_i^s)\in\mathcal{D}_S:y_i^s=c\}$, and $|\mathcal{S}_{S,c}^\rho|=K_c$. We denote the retained source budget by $K=\sum_{c=1}^{C}K_c\ll N_S$.

\subsection{Overview of \method{}}

Figure~\ref{fig:overview} illustrates the proposed framework of \method{}. It first extracts graph embeddings with a shared GNN encoder and estimates the reliability of each source class from the retained source subset. To obtain reusable source-side guidance, \method{} blends empirical prototypes with classifier directions according to this reliability, producing reliability-aware source anchors. These anchors then guide the association of unlabeled target observations by combining classifier predictions with target-to-anchor similarities. The resulting soft assignments are confidence-weighted to form target class centers, which are aligned with source anchors without hard pseudo-label training. Finally, a source-prior regularizer calibrates the target marginal using the class prior observed from the retained source evidence.

\subsection{Reliability-Aware Source Anchors}

\noindent\textbf{Encoder and classifier.}
We use a standard GNN encoder followed by a linear classifier. For a graph $\mathcal{G}$, node representations are updated by
\begin{equation}
    \mathbf{H}_{\mathcal{G}}^{(\ell)}
    =
    \mathrm{GNN}_{\theta}^{(\ell)}
    (\mathcal{G},\mathbf{H}_{\mathcal{G}}^{(\ell-1)}),
    \qquad
    \mathbf{H}_{\mathcal{G}}^{(0)}=\mathbf{X}.
    \label{eq:gnn_update}
\end{equation}
The graph representation is obtained as
\begin{equation}
    \mathbf{z}_{\mathcal{G}}
    =
    \mathrm{READOUT}\left(\mathbf{H}_{\mathcal{G}}^{(L)}\right)
    \in\mathbb{R}^{d_z},
    \label{eq:graph_embedding}
\end{equation}
and its normalized version is $\widetilde{\mathbf{z}}_{\mathcal{G}}=\mathbf{z}_{\mathcal{G}}/(\|\mathbf{z}_{\mathcal{G}}\|_2+\varepsilon)$, where $\varepsilon>0$ is a numerical constant. The classifier predicts
\begin{equation}
    p_{\theta}(y|\mathcal{G})
    =
    \mathrm{softmax}(\mathbf{W}\mathbf{z}_{\mathcal{G}}+\mathbf{b}).
    \label{eq:classifier}
\end{equation}
Let $\mathbf{w}_c$ be the classifier weight vector for class $c$, and define its normalized direction as $\widetilde{\mathbf{w}}_c=\mathbf{w}_c/(\|\mathbf{w}_c\|_2+\varepsilon)$.

\noindent\textbf{Source prototype reliability.}
Under limited retained source evidence, empirical class prototypes may be noisy and may not provide stable reusable source semantics. For each class $c$, we compute the retained-source prototype as
\begin{equation}
    \bar{\boldsymbol{\mu}}_{S,c}
    =
    \frac{1}{K_c}
    \sum_{(\mathcal{G}_i^s,y_i^s)\in\mathcal{S}_{S,c}^{\rho}}
    \widetilde{\mathbf{z}}_{\mathcal{G}_i^s},
    \qquad
    \boldsymbol{\mu}_{S,c}
    =
    \frac{\bar{\boldsymbol{\mu}}_{S,c}}{\|\bar{\boldsymbol{\mu}}_{S,c}\|_2+\varepsilon}.
    \label{eq:source_proto}
\end{equation}
We measure prototype compactness by $r_c=\|\bar{\boldsymbol{\mu}}_{S,c}\|_2$. Since each normalized embedding has unit norm, $0\le r_c\le1$; larger $r_c$ indicates more concentrated retained source embeddings. We combine compactness with source support size to define the reliability coefficient
\begin{equation}
    \alpha_c
    =
    \mathrm{sg}\left[
    \frac{K_c-1}{K_c}r_c
    \right],
    \label{eq:reliability}
\end{equation}
where $\mathrm{sg}[\cdot]$ denotes stop-gradient. Thus, a class with a single retained source graph obtains $\alpha_c=0$, while a class with sufficient and compact samples obtains a larger reliability score. This coefficient allows the learner to assess how trustworthy the retained source evidence is for each class.

\noindent\textbf{Anchor construction.}
To balance instance-level source semantics and classifier-level decision geometry, we fuse the empirical prototype with the classifier direction:
\begin{equation}
    \bar{\mathbf{a}}_c
    =
    \alpha_c\boldsymbol{\mu}_{S,c}
    +
    (1-\alpha_c)\widetilde{\mathbf{w}}_c,
    \qquad
    \mathbf{a}_c
    =
    \mathrm{sg}\left[
    \frac{\bar{\mathbf{a}}_c}{\|\bar{\mathbf{a}}_c\|_2+\varepsilon}
    \right].
    \label{eq:source_anchor}
\end{equation}
The anchor set is $\mathcal{A}_S=\{\mathbf{a}_c\}_{c=1}^{C}$. When the prototype is reliable, $\mathbf{a}_c$ is dominated by $\boldsymbol{\mu}_{S,c}$; otherwise, it falls back to the classifier direction. This prevents noisy prototypes from becoming misleading alignment targets and provides stable reusable source guidance for target adaptation.

\subsection{Prototype-Aware Target Association}

Since target labels are unavailable, target graphs must be associated with source semantics through predictions and representation similarity. Hard pseudo-labeling is fragile under limited source evidence, so \method{} constructs soft and confidence-weighted target class centers. For target graph $\mathcal{G}_j^t$, let $\widetilde{\mathbf{z}}_j^t=\widetilde{\mathbf{z}}_{\mathcal{G}_j^t}$ and $p_{j,c}^t=p_{\theta}(c|\mathcal{G}_j^t)$. We define the prototype-aware assignment logit as
\begin{equation}
    \ell_{j,c}
    =
    \log(p_{j,c}^{t}+\varepsilon)
    +
    (\widetilde{\mathbf{z}}_j^t)^{\top}\mathbf{a}_c.
    \label{eq:target_logit}
\end{equation}
The first term captures classifier confidence, and the second term measures target-to-anchor compatibility. The detached soft assignment is
\begin{equation}
    q_{j,c}
    =
    \mathrm{sg}\left[
    \frac{\exp(\ell_{j,c})}{\sum_{k=1}^{C}\exp(\ell_{j,k})}
    \right].
    \label{eq:soft_assignment}
\end{equation}
Here, $q_{j,c}$ is not used as a target pseudo-label for cross-entropy training; it only provides weights for target center estimation. This design allows unlabeled target observations to be cautiously associated with reusable source anchors without committing to hard pseudo-labels.

To suppress ambiguous target graphs, we compute the assignment entropy $H(q_j)=-\sum_{c=1}^{C}q_{j,c}\log(q_{j,c}+\varepsilon)$ and define
\begin{equation}
    \omega_j
    =
    \mathrm{sg}\left[
    1-\frac{H(q_j)}{\log C}
    \right].
    \label{eq:confidence}
\end{equation}
For class $c$, the weighted target mass and detached class weight are
\begin{equation}
    M_c=\sum_{j=1}^{N_T}\omega_jq_{j,c},
    \qquad
    \beta_c =
    \mathrm{sg}\left[
    \frac{M_c+\varepsilon}{\sum_{k=1}^{C}M_k+C\varepsilon}
    \right].
    \label{eq:target_mass}
\end{equation}
The normalized target center is
\begin{equation}
    \bar{\boldsymbol{\mu}}_{T,c}
    =
    \frac{\sum_{j=1}^{N_T}\omega_jq_{j,c}\widetilde{\mathbf{z}}_j^t}{M_c+\varepsilon},
    \qquad
    \boldsymbol{\mu}_{T,c}
    =
    \frac{\bar{\boldsymbol{\mu}}_{T,c}}{\|\bar{\boldsymbol{\mu}}_{T,c}\|_2+\varepsilon}.
    \label{eq:target_center}
\end{equation}
We align target centers to source anchors by
\begin{equation}
    \mathcal{L}_{pa}
    =
    \sum_{c=1}^{C}
    \beta_c
    \left\|
    \boldsymbol{\mu}_{T,c}-\mathbf{a}_c
    \right\|_2^2.
    \label{eq:prototype_alignment}
\end{equation}
Gradients are propagated through $\boldsymbol{\mu}_{T,c}$ to update target representations, while $\mathbf{a}_c$, $q_{j,c}$, $\omega_j$, and $\beta_c$ are detached. This avoids degenerate minimization by changing anchors or assignments instead of learning target-aligned representations.

\subsection{Source-Prior Target-Marginal Calibration}

Prototype alignment transfers reusable class-level source semantics to the target domain, but it may over-concentrate target graphs around a few anchors. To calibrate target predictions before reliable target feedback is available, \method{} introduces a source-prior regularizer. The retained source prior is $\pi_{S,c}=K_c/K$, and the target marginal prediction is $\bar{p}_{T,c}=N_T^{-1}\sum_{j=1}^{N_T}p_{\theta}(c|\mathcal{G}_j^t)$. The target conditional entropy is
\begin{equation}
    \mathcal{H}_{cond}
    =
    -
    \frac{1}{N_T}
    \sum_{j=1}^{N_T}
    \sum_{c=1}^{C}
    p_{\theta}(c|\mathcal{G}_j^t)
    \log\left(p_{\theta}(c|\mathcal{G}_j^t)+\varepsilon\right),
    \label{eq:target_entropy}
\end{equation}
and the source-prior matching term is
\begin{equation}
    \mathcal{D}_{prior}
    =
    \mathrm{KL}(\pi_S\Vert\bar{p}_T)
    =
    \sum_{c=1}^{C}
    \pi_{S,c}
    \log\left(
    \frac{\pi_{S,c}+\varepsilon}{\bar{p}_{T,c}+\varepsilon}
    \right).
    \label{eq:prior_matching}
\end{equation}
The source-prior target regularization loss is $\mathcal{L}_{ptr}=\mathcal{H}_{cond}+\mathcal{D}_{prior}$. The entropy term sharpens target predictions, while the prior matching term prevents the target marginal from drifting away from retained source evidence. Together, they reduce biased target-marginal updates under limited source evidence and unavailable target labels.

\subsection{Overall Learning Objective}
\label{sec:overall_objective}

The supervised loss is computed only on $\mathcal{S}_S^\rho$. To reduce class imbalance under limited retained source evidence, we use class-balanced cross-entropy:
\begin{equation}
    \mathcal{L}_{sup}
    =
    \frac{1}{C}
    \sum_{c=1}^{C}
    \frac{1}{K_c}
    \sum_{(\mathcal{G}_i^s,y_i^s)\in\mathcal{S}_{S,c}^{\rho}}
    \mathcal{L}_{CE}
    \left(
    p_{\theta}(y|\mathcal{G}_i^s),c
    \right).
    \label{eq:source_supervised}
\end{equation}
The final objective is
\begin{equation}
    \mathcal{L}
    =
    \mathcal{L}_{sup}
    +
    \lambda_1\mathcal{L}_{pa}
    +
    \lambda_2\mathcal{L}_{ptr},
    \label{eq:overall_objective}
\end{equation}
where $\lambda_1,\lambda_2\ge0$ balance the two adaptation losses. 

\begin{theorem}[Target-Risk Bound]
\label{thm:generalization}
Let $f$ be the graph encoder, $h$ be the classifier, and $g=h\circ f\in\mathcal{H}$ be the learned predictor. Let $\mathcal{R}_{T}(g)$ denote the expected target risk and $\widehat{\mathcal{R}}_{\mathcal{S}_S^{\rho}}(g)$ denote the empirical risk on the retained source subset $\mathcal{S}_S^{\rho}$ with $K$ samples. For each class $c$, let $\mathbf{a}_c$ be the learned reliability-aware source anchor and $\mathbf{a}_c^{\star}$ be the ideal class anchor estimated with sufficient source support. Define the retained-source prior $\pi_{S,c}=K_c/K$, the anchor estimation error $\epsilon_{\mathrm{anc}}=\sum_{c=1}^{C}\pi_{S,c}\|\mathbf{a}_c-\mathbf{a}_c^{\star}\|_2$, and the class-mass factor $\gamma_{\beta}=\sum_{c=1}^{C}\pi_{S,c}^{2}/\beta_c$, where $\beta_c>0$ is the detached target class-mass weight in $\mathcal{L}_{pa}$. Assume the loss $\ell$ is bounded in $[0,1]$ and $K_{\mathrm{Lip}}$-Lipschitz with respect to normalized graph representations. Then, with probability at least $1-\delta$, the target risk is bounded by
\begin{equation}
\label{eq:generalization_bound}
\begin{aligned}
\mathcal{R}_{T}(g)
\leq\;&
\widehat{\mathcal{R}}_{\mathcal{S}_S^{\rho}}(g)
+
K_{\mathrm{Lip}}\sqrt{\gamma_{\beta}\mathcal{L}_{pa}}
+
\sqrt{2\mathcal{D}_{prior}}
+
K_{\mathrm{Lip}}\epsilon_{\mathrm{anc}}
\\
&+
2\mathfrak{R}_{K}(\mathcal{H})
+
3\sqrt{\frac{\log(2/\delta)}{2K}}
+
\lambda^{\star},
\end{aligned}
\end{equation}
where $\mathfrak{R}_{K}(\mathcal{H})$ is the empirical Rademacher complexity of $\mathcal{H}$ under $K$ retained source samples, and $\lambda^{\star}$ denotes the minimal joint risk and residual mismatch between the retained source and target domains.
\end{theorem}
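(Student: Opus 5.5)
The plan is a standard domain-adaptation decomposition: split the target risk into a source-side term and a cross-domain discrepancy term, then bound the discrepancy by the quantities that \method{} controls. Let $P_S^\rho$ be the retained-source distribution and $P_T$ the target distribution. Following the Ben-David style argument, introduce the class-conditional decomposition $\mathcal{R}_T(g)=\sum_c \pi_{T,c}\mathcal{R}_{T,c}(g)$. Compare it with $\mathcal{R}_{S}(g)=\sum_c \pi_{S,c}\mathcal{R}_{S,c}(g)$ on the retained source. This gives two kinds of mismatch, each paid separately:
\[
\mathcal{R}_T(g)\le \mathcal{R}_S(g)+\Big|\sum_c(\pi_{T,c}-\pi_{S,c})\mathcal{R}_{T,c}(g)\Big|+\sum_c\pi_{S,c}\big|\mathcal{R}_{T,c}(g)-\mathcal{R}_{S,c}(g)\big|+\lambda^\star .
\]
\begin{itemize}
\item \textbf{Label-marginal mismatch.} Because $\ell\in[0,1]$, this term is at most $\|\pi_T-\pi_S\|_1$.
\item \textbf{Class-conditional shift.} Each class term is weighted by $\pi_{S,c}$.
\item \textbf{Residual.} The unidentifiable parts, namely optimal joint risk, the gap between the true target label marginal and the predicted marginal $\bar p_T$, and within-class spread beyond the centers, are absorbed into $\lambda^\star$ by definition.
\end{itemize}

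\textbf{Marginal term.} I replace $\pi_T$ by $\bar p_T$, with the difference going into $\lambda^\star$. Pinsker's inequality then gives $\|\pi_S-\bar p_T\|_1\le\sqrt{2\,\mathrm{KL}(\pi_S\Vert\bar p_T)}=\sqrt{2\mathcal{D}_{prior}}$, treating the $\varepsilon$-smoothing as negligible or absorbed.

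\textbf{Conditional term.} Since $\ell$ is $K_{\mathrm{Lip}}$-Lipschitz in normalized representations, I approximate each class-conditional risk by the loss evaluated at the class center, with the within-class deviation absorbed into $\lambda^\star$. This gives $|\mathcal{R}_{T,c}-\mathcal{R}_{S,c}|\le K_{\mathrm{Lip}}\|\boldsymbol{\mu}_{T,c}-\mathbf{a}^\star_c\|_2$, taking the ideal anchor as the ideal source class center. The triangle inequality through $\mathbf{a}_c$ splits this into $\|\boldsymbol{\mu}_{T,c}-\mathbf{a}_c\|_2+\|\mathbf{a}_c-\mathbf{a}_c^\star\|_2$. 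Summing the second part with weights $\pi_{S,c}$ gives exactly $K_{\mathrm{Lip}}\epsilon_{\mathrm{anc}}$.

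For the first part, write $\pi_{S,c}=(\pi_{S,c}/\sqrt{\beta_c})\sqrt{\beta_c}$ and apply Cauchy–Schwarz:
\[
\sum_c\pi_{S,c}\|\boldsymbol{\mu}_{T,c}-\mathbf{a}_c\|\le\Big(\sum_c\pi_{S,c}^2/\beta_c\Big)^{1/2}\Big(\sum_c\beta_c\|\boldsymbol{\mu}_{T,c}-\mathbf{a}_c\|^2\Big)^{1/2}=\sqrt{\gamma_\beta\mathcal{L}_{pa}}.
\]
This produces the $K_{\mathrm{Lip}}\sqrt{\gamma_\beta\mathcal{L}_{pa}}$ term and explains the form of $\gamma_\beta$.

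\textbf{Source term.} I bound $\mathcal{R}_S(g)$ by $\widehat{\mathcal{R}}_{\mathcal{S}_S^\rho}(g)$ using the standard Rademacher uniform-convergence bound for $[0,1]$-valued losses over $K$ i.i.d.\ samples. Using the empirical Rademacher complexity gives $2\mathfrak{R}_K(\mathcal{H})+3\sqrt{\log(2/\delta)/(2K)}$ with probability at least $1-\delta$, via McDiarmid applied twice with a union bound. This holds uniformly in $g$, so it covers the data-dependent learned predictor.

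\textbf{Main obstacle.} The hardest step is making the Lipschitz center-comparison rigorous. Class-conditional risks are expectations of $\ell$ over whole distributions, not values at centers. The target centers are also built from soft weights $\omega_jq_{j,c}$ rather than true labels. I would first define $\lambda^\star$ generously enough to contain three things: the difference between expected loss and loss at the center, the mismatch between soft-assignment-weighted centers and true-label centers, and the gap between $\bar p_T$ and the true target marginal. With $\lambda^\star$ defined this way, each remaining inequality is a clean triangle, Pinsker, or Cauchy–Schwarz step. One caveat: the result is only as strong as this residual term, which the statement explicitly permits.
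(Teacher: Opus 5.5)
Your proposal is correct at the paper's level of rigor and follows essentially the same route as the paper. The steps match one for one: a Rademacher bound for the retained-source risk, then a split of the domain gap into a class-conditional term and a label-marginal term plus $\lambda^{\star}$, then the Lipschitz reduction to $\|\boldsymbol{\mu}_{T,c}-\mathbf{a}_c^{\star}\|_2$, the triangle inequality through $\mathbf{a}_c$, Cauchy--Schwarz with weights $\beta_c$ to obtain $\sqrt{\gamma_{\beta}\mathcal{L}_{pa}}$, and Pinsker for $\sqrt{2\mathcal{D}_{prior}}$.

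The only difference is that you write the decomposition explicitly as $\sum_c(\pi_{T,c}-\pi_{S,c})\mathcal{R}_{T,c}+\sum_c\pi_{S,c}(\mathcal{R}_{T,c}-\mathcal{R}_{S,c})$ and state what you absorb into $\lambda^{\star}$: the center approximation, the soft-assignment bias, the gap between $\bar p_T$ and $\pi_T$, and the $\varepsilon$-smoothing. The paper instead simply asserts a ``standard'' decomposition with $\Delta_{\mathrm{cls}}$ and $\Delta_{\mathrm{prior}}$, so your version is, if anything, more transparent about where the argument leans on the residual.
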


Theorem~\ref{thm:generalization} shows that \method{} controls target risk through three optimizable terms. The prototype alignment loss $\mathcal{L}_{pa}$ reduces class-wise representation discrepancy, the prior matching term $\mathcal{D}_{prior}$ controls target marginal mismatch, and the reliability-aware anchor construction reduces $\epsilon_{\mathrm{anc}}$ by preventing noisy empirical prototypes from dominating target alignment. The proof of Theorem~\ref{thm:generalization} can be found in Appendix~\ref{app:proof_generalization}.

\section{Experiments}

\subsection{Experimental Settings}

\noindent\textbf{Datasets.} We evaluate \method{} on two representative types of graph distribution shifts. (1) \textit{Structure-based domain shifts} are constructed on Mutagenicity~\cite{kazius2005derivation} and NCI1~\cite{wale2008comparison}. Following standard GDA protocols~\cite{yin2022deal,yin2023coco}, we partition graphs by node-density and edge-density statistics to create source and target domains with different structural biases while preserving the original classification tasks. (2) \textit{Feature-based domain shifts} are evaluated on commonly used paired benchmarks, including DD, PROTEINS, BZR, BZR\_MD, COX2, and COX2\_MD, where source and target domains exhibit notable feature distribution discrepancies. The statistics of the above datasets are provided in Appendix~\ref{sec:dataset}.

\noindent\textbf{Baselines.} We compare \method{} with three groups of baselines. The first group contains the graph kernel method PathNN~\cite{michel2023path}. The second group includes four general GNN backbones: GCN~\cite{kipf2016semi}, GIN~\cite{xu2018powerful}, CIN~\cite{bodnar2021weisfeiler}, and GMT~\cite{baek2021accurate}. The third group includes seven representative GDA methods: DEAL~\cite{yin2022deal}, SGDA~\cite{qiao2023semi}, StruRW~\cite{liu2023structural}, A2GNN~\cite{liu2024rethinking}, PA-BOTH~\cite{liu2024pairwise}, GAA~\cite{fang2025benefits}, and TDSS~\cite{chen2025smoothness}.

\noindent\textbf{Implementation Details.} We implement \method{} in PyTorch. Unless otherwise specified, GIN~\cite{xu2018powerful} is used as the backbone encoder, with three GNN layers and a hidden dimension of 128. All neural methods are optimized with Adam using a learning rate of $3\times10^{-4}$ and a weight decay of $1\times10^{-12}$. We use a class-stratified retained source subset with source-data ratio $\rho=0.3$ by default. For \method{}, the loss coefficients are fixed to $\lambda_1=0.7$ and $\lambda_2=0.5$. We report classification accuracy as the evaluation metric, and present the mean and standard deviation over five independent runs.

\begin{table}[t]
\centering
\caption{Graph classification results (in \%) under node and edge density domain shifts on the Mutagenicity dataset, and 
under feature shifts (source$\rightarrow$target). P, D, C, CM, B, and BM denote PROTEINS, DD, COX2, COX2\_MD, BZR, and BZR\_MD, respectively. \textbf{Bold} indicates the best performance.}
\label{tab:main_mutag_updated}
\vspace{0.2cm}
\resizebox{\textwidth}{!}{%
\begin{tabular}{lccc|ccc|cccccc}
\toprule
\multirow{2}{*}{Methods} & \multicolumn{3}{c|}{Node Shift} & \multicolumn{3}{c|}{Edge Shift} & \multicolumn{6}{c}{Feature Shift} \\
\cmidrule(lr){2-4} \cmidrule(lr){5-7} \cmidrule(lr){8-13}
& M0$\rightarrow$M1 & M0$\rightarrow$M2 & M0$\rightarrow$M3 & M0$\rightarrow$M1 & M0$\rightarrow$M2 & M0$\rightarrow$M3 & P$\rightarrow$D & D$\rightarrow$P & C$\rightarrow$CM & CM$\rightarrow$C & B$\rightarrow$BM & BM$\rightarrow$B \\
\midrule
GCN     & 53.2$\pm$4.1 & 49.5$\pm$3.6 & 50.1$\pm$4.5 & 61.5$\pm$5.4 & 53.2$\pm$4.8 & 47.3$\pm$3.2 & 52.3$\pm$4.1 & 48.7$\pm$5.2 & 41.2$\pm$3.8 & 45.6$\pm$2.9 & 44.3$\pm$6.2 & 43.8$\pm$4.7 \\
GIN     & 51.4$\pm$5.2 & 47.8$\pm$4.4 & 49.6$\pm$3.8 & 59.8$\pm$6.1 & 51.6$\pm$5.2 & 46.5$\pm$4.1 & 50.7$\pm$5.3 & 47.2$\pm$4.6 & 40.8$\pm$4.1 & 44.9$\pm$3.2 & 43.6$\pm$5.8 & 42.1$\pm$4.9 \\
GMT     & 55.1$\pm$3.8 & 51.2$\pm$3.2 & 51.5$\pm$2.9 & 62.4$\pm$4.5 & 54.5$\pm$3.9 & 48.2$\pm$2.8 & 54.2$\pm$3.7 & 49.6$\pm$3.4 & 42.4$\pm$3.6 & 46.8$\pm$2.8 & 45.4$\pm$5.1 & 44.7$\pm$4.2 \\
CIN     & 54.3$\pm$4.5 & 50.6$\pm$3.7 & 50.9$\pm$3.4 & 61.9$\pm$5.0 & 53.8$\pm$4.2 & 47.8$\pm$3.1 & 53.6$\pm$4.2 & 48.9$\pm$4.1 & 41.7$\pm$3.9 & 45.2$\pm$3.1 & 44.8$\pm$5.4 & 43.9$\pm$4.5 \\
PathNN  & 52.8$\pm$4.8 & 48.9$\pm$4.1 & 50.4$\pm$3.6 & 60.5$\pm$5.6 & 52.4$\pm$4.9 & 46.9$\pm$3.7 & 51.8$\pm$4.6 & 47.8$\pm$4.3 & 41.1$\pm$4.2 & 44.7$\pm$3.4 & 43.9$\pm$5.6 & 43.2$\pm$4.8 \\
\midrule
DEAL    & 72.1$\pm$2.1 & 64.8$\pm$2.4 & 56.1$\pm$1.2 & 72.8$\pm$0.7 & 62.5$\pm$2.1 & 54.2$\pm$1.4 & 59.8$\pm$0.9 & 50.2$\pm$9.8 & 42.1$\pm$6.1 & 49.3$\pm$3.5 & 48.9$\pm$7.6 & 47.7$\pm$5.0 \\
SGDA    & 54.7$\pm$4.6 & 50.2$\pm$3.7 & 50.8$\pm$4.1 & 63.3$\pm$6.9 & 54.1$\pm$5.7 & 47.9$\pm$4.1 & 48.0$\pm$3.8 & 54.6$\pm$10.8 & 41.3$\pm$0.3 & 49.8$\pm$1.5 & 49.8$\pm$9.9 & 51.7$\pm$8.5 \\
A2GNN   & 42.5$\pm$7.1 & 46.1$\pm$6.5 & 50.6$\pm$3.8 & 60.6$\pm$8.7 & 54.4$\pm$7.2 & 49.1$\pm$3.5 & 61.9$\pm$0.7 & 57.0$\pm$1.0 & 45.4$\pm$9.9 & 45.5$\pm$5.4 & 48.1$\pm$5.3 & 63.0$\pm$4.4 \\
StruRW  & 74.3$\pm$1.7 & 66.8$\pm$1.1 & 53.2$\pm$1.8 & 73.1$\pm$0.9 & 63.7$\pm$1.6 & 50.8$\pm$2.1 & 54.0$\pm$6.9 & 47.6$\pm$11.4 & 40.7$\pm$0.2 & 49.5$\pm$2.8 & 48.8$\pm$8.0 & 52.3$\pm$4.8 \\
PA-BOTH & 72.9$\pm$0.8 & 65.4$\pm$2.3 & 52.1$\pm$0.9 & 72.4$\pm$1.1 & 53.9$\pm$6.2 & 52.5$\pm$1.4 & 54.1$\pm$6.4 & 47.1$\pm$10.0 & 40.7$\pm$0.2 & 49.8$\pm$2.6 & 49.9$\pm$7.0 & 51.4$\pm$4.9 \\
GAA     & 75.8$\pm$1.4 & 68.9$\pm$0.9 & 55.5$\pm$1.1 & 72.9$\pm$1.2 & 65.7$\pm$1.1 & 55.5$\pm$2.3 & 47.0$\pm$3.7 & 58.0$\pm$1.8 & 53.0$\pm$12.8 & 50.1$\pm$3.3 & 52.0$\pm$6.8 & 49.4$\pm$9.7 \\
TDSS    & 43.8$\pm$6.6 & 50.1$\pm$7.1 & 50.9$\pm$3.2 & 62.6$\pm$8.5 & 54.0$\pm$5.9 & 49.4$\pm$4.6 & 62.3$\pm$0.6 & 57.9$\pm$0.5 & 45.4$\pm$9.9 & 46.5$\pm$3.3 & 46.6$\pm$6.9 & 50.5$\pm$24.1 \\
\midrule
\method{}    & \textbf{76.9}$\pm$2.2 & \textbf{69.7}$\pm$1.2 & \textbf{63.4}$\pm$2.3 & \textbf{74.8}$\pm$1.2 & \textbf{67.1}$\pm$1.6 & \textbf{65.6}$\pm$2.3 & \textbf{64.1}$\pm$1.5 & \textbf{62.5}$\pm$3.1 & \textbf{54.2}$\pm$11.8 & \textbf{51.3}$\pm$4.5 & \textbf{53.1}$\pm$9.5 & \textbf{68.3}$\pm$8.8 \\
\bottomrule
\end{tabular}%
}
\vspace{-0.4cm}
\end{table}

\begin{table}[t]
\centering
\caption{Aggregate comparison across all reported tasks. \textbf{Bold} results indicate the best performance.}
\vspace{0.2cm}
\label{tab:aggregate_summary_shift}
\resizebox{0.85\textwidth}{!}{%
\begin{tabular}{lcccccc}
\toprule
Methods 
& Node Avg. 
& Edge Avg. 
& Feature Avg. 
& Avg. Rank 
& $p$-value 
& W/T/L \\
\midrule
GCN     
& 46.82 & 47.23 & 45.98 & 11.52 & $1.84{\times}10^{-23}$ & 54/0/0 \\
GIN     
& 46.25 & 46.87 & 44.88 & 12.34 & $1.23{\times}10^{-23}$ & 54/0/0 \\
GMT     
& 48.03 & 48.46 & 47.18 & 7.78  & $4.25{\times}10^{-22}$ & 54/0/0 \\
CIN     
& 47.50 & 47.89 & 46.35 & 9.91  & $8.96{\times}10^{-23}$ & 54/0/0 \\
PathNN  
& 47.08 & 47.52 & 45.42 & 10.13 & $4.90{\times}10^{-23}$ & 54/0/0 \\
\midrule
DEAL    
& 57.10 & 56.87 & 49.67 & 3.75  & $3.38{\times}10^{-15}$ & 52/1/1 \\
SGDA    
& 48.30 & 49.45 & 49.20 & 6.88  & $4.71{\times}10^{-21}$ & 53/0/1 \\
A2GNN   
& 48.27 & 48.77 & 53.48 & 6.88  & $2.62{\times}10^{-19}$ & 54/0/0 \\
StruRW  
& 52.08 & 52.85 & 48.82 & 5.86  & $6.12{\times}10^{-16}$ & 53/0/1 \\
PA-BOTH 
& 52.63 & 52.54 & 48.83 & 5.69  & $1.37{\times}10^{-15}$ & 53/0/1 \\
GAA     
& 63.15 & 62.20 & 51.58 & 2.49  & $2.58{\times}10^{-8}$  & 48/1/5 \\
TDSS    
& 48.60 & 48.96 & 51.53 & 6.55  & $4.54{\times}10^{-19}$ & 52/0/2 \\
\midrule
\method{} 
& \textbf{66.89} 
& \textbf{66.45} 
& \textbf{58.92} 
& \textbf{1.22} 
& -- 
& -- \\
\bottomrule
\end{tabular}%
}
\vspace{-0.5cm}
\end{table}

\vspace{-0.2cm}
\subsection{Performance Comparison}

Table~\ref{tab:main_mutag_updated}, \ref{tab:mutag_node}-\ref{tab:nci1_idx} report the performance comparison under different domain shifts. From these tables, we can make the following observations: (1) Standard GNNs such as GCN, GIN, and GMT generally perform poorly under domain shifts, indicating that directly transferring a predictor trained on the retained source subset is insufficient when target graphs exhibit shifted structures or features. Since these methods do not exploit unlabeled target observations, their predictions are sensitive to reduced and potentially biased source evidence. (2) Existing GDA methods, including DEAL, GAA, and TDSS, usually improve over standard graph classifiers, confirming the benefit of adaptation objectives under source-target shifts. By leveraging representation alignment, propagation calibration, structural reweighting, target-side smoothing, or attribute-based adaptation, they can partially reduce domain discrepancy. However, their performance remains unstable across shift types and transfer directions, suggesting that existing GDA objectives still depend on reliable source semantics, which become fragile when only limited source graphs are retained. (3) \method{} achieves the best average performance and the lowest average rank, demonstrating its effectiveness for data-efficient agentic GDA. The gains mainly come from reliability-aware source anchors, which combine empirical prototypes with classifier directions to stabilize reusable source guidance under limited retained evidence. Moreover, prototype-aware soft target association aligns confidence-weighted target centers with source anchors, enabling unlabeled target observations to be associated with source semantics without hard pseudo-labels. Source-prior regularization further calibrates the target marginal by encouraging confident predictions while preventing drift toward unreliable classes. More results on other datasets can be found in Appendix~\ref{appendix:results}.

To summarize performance across all shift types, we report aggregate results in Table~\ref{tab:aggregate_summary_shift}. Node Avg., Edge Avg., and Feature Avg. denote the mean accuracy over node-density, edge-density, and feature-shift tasks, respectively. Avg. Rank is computed over all $54$ reported settings, where a smaller value indicates better overall performance. The $p$-value is obtained by a paired task-level $t$-test between \method{} and each baseline, and W/T/L counts the number of wins, ties, and losses according to the reported mean $\pm$ standard-deviation intervals. As shown in Table~\ref{tab:aggregate_summary_shift}, \method{} achieves the best average performance across all three shift categories and obtains the lowest Avg. Rank of $1.22$. Compared with the strongest baseline GAA, \method{} still achieves a clear advantage, with $48$ wins over $54$ settings. These results indicate that the improvement of \method{} is consistent across different shift types rather than being driven by a few favorable tasks, suggesting higher information yield from the same retained source budget.

\vspace{-0.2cm}
\subsection{Information Yield under Source-Data Budgets}

To evaluate how effectively each method exploits retained source evidence, we vary the source-data ratio $\rho$ within $\{0.1,0.3,0.5,0.7,0.9\}$ on Mutagenicity node shift and report the average accuracy in Figure~\ref{fig:efficiency_sensitivity}(a). We compare \method{} with two strong GDA baselines, GAA and DEAL, as well as the average performance of all compared baselines.

As shown in Figure~\ref{fig:efficiency_sensitivity}(a), \method{} outperforms both strong baselines and the average baseline across all source ratios. The advantage is particularly evident when the retained source ratio is small, where existing GDA methods suffer from weaker source supervision and less reliable alignment signals. By estimating reliability-aware anchors from the retained subset and using them to guide soft target association, \method{} extracts more useful adaptation signals from limited source evidence. As the source ratio increases, all methods benefit from additional source graphs, but \method{} maintains a clear margin, demonstrating that the proposed framework improves adaptation effectiveness under the same source-data budget.

\begin{figure*}[h]
    \centering
    \subfloat[Source data ratio $\rho$]{
        \includegraphics[width=0.32\textwidth]{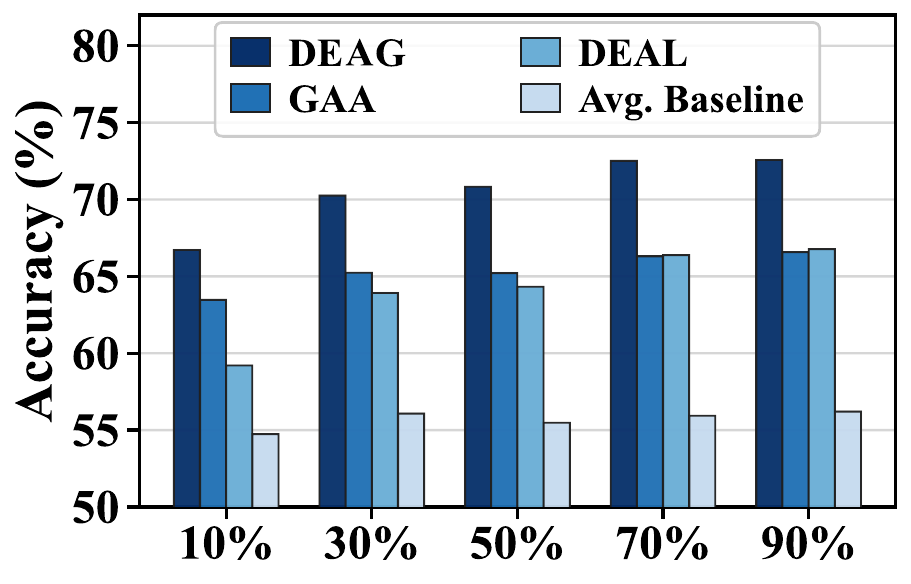}
        \label{fig:node_more}
    }
    \subfloat[$\lambda_1$]{
        \includegraphics[width=0.32\textwidth]{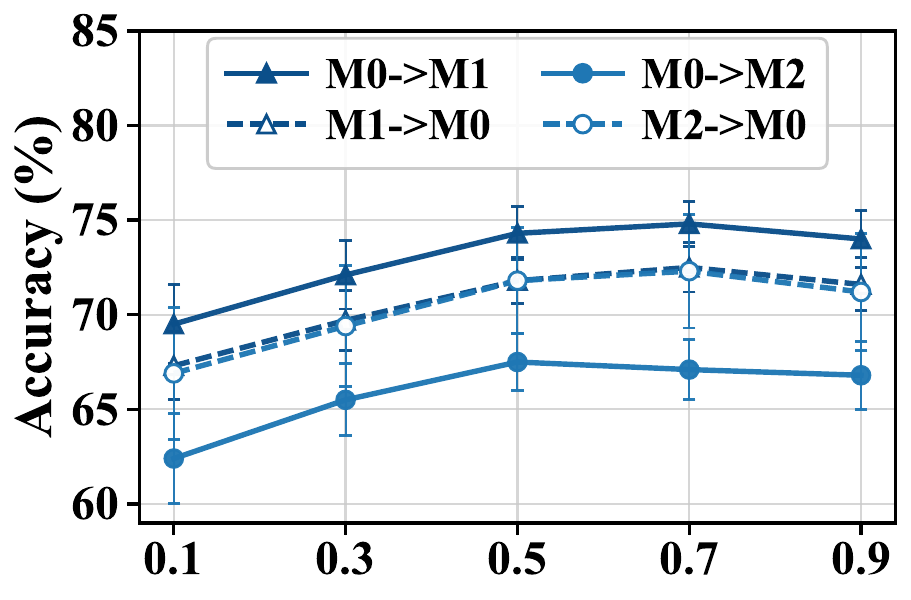}
        \label{fig:lambda1_mutag}
    }
    \subfloat[$\lambda_2$]{
        \includegraphics[width=0.32\textwidth]{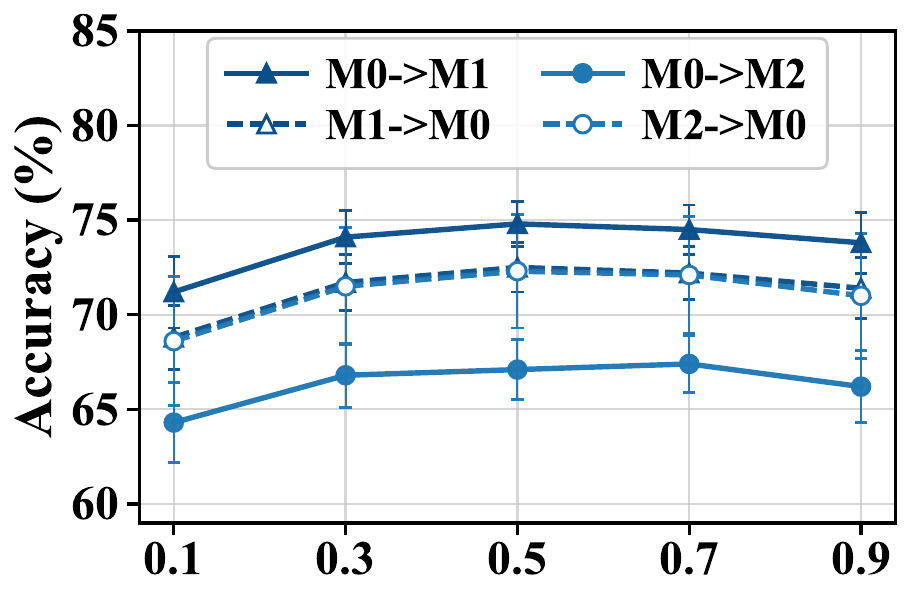}
        \label{fig:lambda2_mutag}
    }

    \caption{Information yield under source-data budgets and sensitivity analysis on Mutagenicity.}
    \label{fig:efficiency_sensitivity}
    \vspace{-0.5cm}
\end{figure*}

\subsection{Sensitivity Analysis}

We analyze the sensitivity of \method{} to the two balancing coefficients $\lambda_1$ and $\lambda_2$. The coefficient $\lambda_1$ controls the prototype alignment loss $\mathcal{L}_{pa}$, while $\lambda_2$ controls the source-prior target regularization loss $\mathcal{L}_{ptr}$. We vary each coefficient within $\{0.1,0.3,0.5,0.7,0.9\}$ and keep the other hyperparameters fixed.

Figures~\ref{fig:efficiency_sensitivity}(b)(c) show that \method{} is stable across a broad range of both coefficients. A small $\lambda_1$ provides insufficient guidance from reliability-aware source anchors, while a moderate value strengthens class-level semantic transfer. An overly large $\lambda_1$ can slightly reduce flexibility by forcing uncertain target centers toward imperfect anchors. For $\lambda_2$, a small value weakens target-marginal calibration and makes predictions more prone to drifting toward initially favored classes. A moderate value improves robustness by jointly sharpening target predictions and matching the target marginal to the retained source prior, whereas an excessively large value may over-constrain the target distribution. The default setting $\lambda_1=0.7$ and $\lambda_2=0.5$ provides a good balance between source-semantic alignment and target-marginal calibration.

\subsection{Ablation Study}

We conduct ablation studies to examine the contribution of each component in \method{}.
\method{} {w/o $\mathcal{L}_{pa}$} removes the prototype alignment loss, so target centers are no longer explicitly aligned with reliability-aware source anchors.
\method{} {w/o $\mathcal{L}_{ptr}$} removes the source-prior target regularization loss.
To further disentangle $\mathcal{L}_{ptr}$, \method{} {w/o $H_{\mathrm{cond}}$} removes the conditional entropy term, while \method{} {w/o $D_{\mathrm{prior}}$} removes the source-prior matching term.

\begin{table}[t]
\centering
\caption{The results of ablation studies on the Mutagenicity dataset. \textbf{Bold} results indicate the best performance.}
\label{tab:ablation_mutag}
\vspace{0.2cm}
\resizebox{\textwidth}{!}{%
\begin{tabular}{lcccccccccccc}
\toprule
Method & M0$\rightarrow$M1 & M0$\rightarrow$M2 & M0$\rightarrow$M3 & M1$\rightarrow$M0 & M1$\rightarrow$M2 & M1$\rightarrow$M3 & M2$\rightarrow$M0 & M2$\rightarrow$M1 & M2$\rightarrow$M3 & M3$\rightarrow$M0 & M3$\rightarrow$M1 & M3$\rightarrow$M2 \\
\midrule
\method{} w/o $\mathcal{L}_{pa}$       & 70.8 & 63.4 & 57.1 & 65.4 & 67.5 & 59.8 & 62.3 & 73.1 & 61.2 & 58.6 & 60.1 & 67.8 \\
\method{} w/o $\mathcal{L}_{ptr}$      & 73.4 & 66.1 & 60.3 & 67.3 & 70.2 & 62.5 & 65.0 & 75.8 & 64.1 & 62.1 & 63.4 & 70.6 \\
\method{} w/o $H_{\text{cond}}$        & 74.9 & 67.5 & 61.6 & 69.8 & 71.8 & 64.1 & 66.9 & 77.4 & 65.7 & 63.4 & 65.0 & 72.3 \\
\method{} w/o $D_{\text{prior}}$       & 75.3 & 68.2 & 61.2 & 70.7 & 72.3 & 64.6 & 67.8 & 77.8 & 66.2 & 63.1 & 65.5 & 72.7 \\
\midrule
\method{}                  & \textbf{76.9} & \textbf{69.7} & \textbf{63.4} & \textbf{71.8} & \textbf{73.7} & \textbf{65.9} & \textbf{69.1} & \textbf{79.2} & \textbf{67.5} & \textbf{65.0} & \textbf{66.8} & \textbf{74.0} \\
\bottomrule
\end{tabular}%
}
\vspace{-0.4cm}
\end{table}

Table~\ref{tab:ablation_mutag} reports the ablation results. From the table, we make four observations: (1) The full model achieves the best performance across the ablation settings, indicating that the three components of \method{} provide complementary benefits rather than redundant regularization.
(2) Removing $\mathcal{L}_{pa}$ leads to the most consistent and pronounced degradation. This trend shows that reliability-aware prototype alignment is the primary mechanism for transferring reusable class-level source semantics to the target domain. Without this loss, target representations are no longer explicitly guided toward source anchors, making adaptation more vulnerable to weakened source evidence.
(3) Removing $\mathcal{L}_{ptr}$ also causes clear performance drops across transfer directions, suggesting that prototype alignment alone cannot fully stabilize target predictions. This confirms the necessity of source-prior target regularization, especially when early target associations may be biased under limited source evidence.
(4) Removing either $H_{\mathrm{cond}}$ or $D_{\mathrm{prior}}$ weakens the model, but the degradation is generally milder than removing the entire $\mathcal{L}_{ptr}$. This indicates that the two terms play complementary roles: $H_{\mathrm{cond}}$ improves sample-level prediction confidence, while $D_{\mathrm{prior}}$ constrains the target marginal using retained source evidence.
Overall, the ablation trends verify that \method{} benefits from the joint effect of reliability-aware source anchoring, prototype-aware target association, and source-prior target-marginal calibration in the limited-source setting.
\vspace{-0.2cm}
\section{Conclusion}

We propose \method{}, a reliability-aware prototype learning framework for data-efficient agentic graph domain adaptation. Using source support and embedding compactness, \method{} blends empirical prototypes with classifier directions into reusable anchors. Soft association aligns confidence-weighted target centers with these anchors without hard pseudo-label training, while source-prior marginal calibration stabilizes predictions using retained source evidence. Experiments across structural and feature shifts demonstrate average gains over competitive baselines under matched source-data budgets.

\bibliographystyle{splncs04}
\bibliography{reference}

@String { ACMMM        = {Proceedings of the ACM International Conference on Multimedia} }

@String { JMLR         = {The Journal of Machine Learning Research.} }

@String { ICLR         = {Proceedings of the International Conference on Learning Representations
} }

@String { ICML         = {Proceedings of the International Conference on Machine Learning} }

@String { NIPS         = {Proceedings of the Conference on Neural Information Processing Systems} }

@String { TPAMI         = {IEEE Transactions on Pattern Analysis and Machine Intelligence} }

@String { AAAI         = {Proceedings of the AAAI Conference on Artificial Intelligence}}

@String { WWW        = {Proceedings of the ACM Web Conference}}

@String { TKDD        = {ACM Transactions on Knowledge Discovery from Data}}

@String { TKDE        = {IEEE Transactions on Knowledge and Data Engineering}}

@String { WSDM       = {Proceedings of the International Conference on Web Search and Data Mining}}

@String { WSDM       = {Proceedings of the International ACM Conference on Web Search \& Data Mining}}

@String { KDD        = {Proceedings of the International ACM SIGKDD Conference on Knowledge Discovery \& Data Mining}}

@article{kipf2016semi,
  title={Semi-supervised classification with graph convolutional networks},
  author={Kipf, Thomas N and Welling, Max},
  journal={arXiv preprint arXiv:1609.02907},
  year={2016}
}

@article{xu2018powerful,
  title={How powerful are graph neural networks?},
  author={Xu, Keyulu and Hu, Weihua and Leskovec, Jure and Jegelka, Stefanie},
  journal={arXiv preprint arXiv:1810.00826},
  year={2018}
}

@inproceedings{yin2022deal,
  title={Deal: An unsupervised domain adaptive framework for graph-level classification},
  author={Yin, Nan and Shen, Li and Li, Baopu and Wang, Mengzhu and Luo, Xiao and Chen, Chong and Luo, Zhigang and Hua, Xian-Sheng},
  booktitle=ACMMM,
  year={2022}
}

@inproceedings{yin2023coco,
  title={Coco: A coupled contrastive framework for unsupervised domain adaptive graph classification},
  author={Yin, Nan and Shen, Li and Wang, Mengzhu and Lan, Long and Ma, Zeyu and Chen, Chong and Hua, Xian-Sheng and Luo, Xiao},
  booktitle=ICML,
  year={2023}
}

@article{yin2025dream,
  title={Dream: a dual variational framework for unsupervised graph domain adaptation},
  author={Yin, Nan and Shen, Li and Wang, Mengzhu and Liu, Xinwang and Chen, Chong and Hua, Xian-Sheng},
  journal=TPAMI,
  year={2025},
  publisher={IEEE}
}

@inproceedings{liu2023structural,
  title={Structural re-weighting improves graph domain adaptation},
  author={Liu, Shikun and Li, Tianchun and Feng, Yongbin and Tran, Nhan and Zhao, Han and Qiu, Qiang and Li, Pan},
  booktitle=ICML,
  year={2023},
  organization={PMLR}
}

@inproceedings{liu2024rethinking,
  title={Rethinking propagation for unsupervised graph domain adaptation},
  author={Liu, Meihan and Fang, Zeyu and Zhang, Zhen and Gu, Ming and Zhou, Sheng and Wang, Xin and Bu, Jiajun},
  booktitle=AAAI,
  year={2024}
}

@article{liu2024pairwise,
  title={Pairwise alignment improves graph domain adaptation},
  author={Liu, Shikun and Zou, Deyu and Zhao, Han and Li, Pan},
  journal={arXiv preprint arXiv:2403.01092},
  year={2024}
}

@inproceedings{chen2025smoothness,
  title={Smoothness really matters: A simple yet effective approach for unsupervised graph domain adaptation},
  author={Chen, Wei and Ye, Guo and Wang, Yakun and Zhang, Zhao and Zhang, Libang and Wang, Daixin and Zhang, Zhiqiang and Zhuang, Fuzhen},
  booktitle=AAAI,
  year={2025}
}

@inproceedings{wang2026nested,
  title={Nested graph pseudo-label refinement for noisy label domain adaptation learning},
  author={Wang, Yingxu and Wang, Mengzhu and Huang, Zhichao and Liu, Suyu and Yin, Nan},
  booktitle=AAAI,
  year={2026}
}

@article{ganin2016domain,
  title={Domain-adversarial training of neural networks},
  author={Ganin, Yaroslav and Ustinova, Evgeniya and Ajakan, Hana and Germain, Pascal and Larochelle, Hugo and Laviolette, Fran{\c{c}}ois and March, Mario and Lempitsky, Victor},
  journal=JMLR,
  year={2016}
}

@inproceedings{long2015learning,
  title={Learning transferable features with deep adaptation networks},
  author={Long, Mingsheng and Cao, Yue and Wang, Jianmin and Jordan, Michael},
  booktitle=ICML,
  year={2015},
  organization={PMLR}
}

@inproceedings{liang2020we,
  title={Do we really need to access the source data? source hypothesis transfer for unsupervised domain adaptation},
  author={Liang, Jian and Hu, Dapeng and Feng, Jiashi},
  booktitle=ICML,
  year={2020},
  organization={PMLR}
}

@inproceedings{garg2023rlsbench,
  title={Rlsbench: Domain adaptation under relaxed label shift},
  author={Garg, Saurabh and Erickson, Nick and Sharpnack, James and Smola, Alex and Balakrishnan, Sivaraman and Lipton, Zachary Chase},
  booktitle=ICML,
  year={2023},
  organization={PMLR}
}

@article{zhang2026brainriem,
  title={BrainRiem: Riemannian Prototype Learning for Source-Free Cross-Site Brain Network Diagnosis},
  author={Zhang, Kunyu and Xu, Tianxiang},
  journal={arXiv preprint arXiv:2606.29200},
  year={2026}
}

@article{ju2024survey,
  title={A survey of data-efficient graph learning},
  author={Ju, Wei and Yi, Siyu and Wang, Yifan and Long, Qingqing and Luo, Junyu and Xiao, Zhiping and Zhang, Ming},
  journal={arXiv preprint arXiv:2402.00447},
  year={2024}
}

@article{hashemi2024comprehensive,
  title={A comprehensive survey on graph reduction: Sparsification, coarsening, and condensation},
  author={Hashemi, Mohammad and Gong, Shengbo and Ni, Juntong and Fan, Wenqi and Prakash, B Aditya and Jin, Wei},
  journal={arXiv preprint arXiv:2402.03358},
  year={2024}
}

@article{zhang2024navigating,
  title={Navigating complexity: Toward lossless graph condensation via expanding window matching},
  author={Zhang, Yuchen and Zhang, Tianle and Wang, Kai and Guo, Ziyao and Liang, Yuxuan and Bresson, Xavier and Jin, Wei and You, Yang},
  journal={arXiv preprint arXiv:2402.05011},
  year={2024}
}

@article{bodnar2021weisfeiler,
  title={Weisfeiler and lehman go cellular: Cw networks},
  author={Bodnar, Cristian and Frasca, Fabrizio and Otter, Nina and Wang, Yuguang and Lio, Pietro and Montufar, Guido F and Bronstein, Michael},
  journal=NIPS,
  year={2021}
}

@article{baek2021accurate,
  title={Accurate learning of graph representations with graph multiset pooling},
  author={Baek, Jinheon and Kang, Minki and Hwang, Sung Ju},
  journal={arXiv preprint arXiv:2102.11533},
  year={2021}
}

@article{kazius2005derivation,
  title={Derivation and validation of toxicophores for mutagenicity prediction},
  author={Kazius, Jeroen and McGuire, Ross and Bursi, Roberta},
  journal={Journal of medicinal chemistry},
  year={2005},
  publisher={ACS Publications}
}

@article{wale2008comparison,
  title={Comparison of descriptor spaces for chemical compound retrieval and classification},
  author={Wale, Nikil and Watson, Ian A and Karypis, George},
  journal={Knowledge and Information Systems},
  year={2008},
  publisher={Springer}
}

@inproceedings{michel2023path,
  title={Path neural networks: Expressive and accurate graph neural networks},
  author={Michel, Gaspard and Nikolentzos, Giannis and Lutzeyer, Johannes F and Vazirgiannis, Michalis},
  booktitle=ICML,
  year={2023},
  organization={PMLR}
}

@article{qiao2023semi,
  title={Semi-supervised domain adaptation in graph transfer learning},
  author={Qiao, Ziyue and Luo, Xiao and Xiao, Meng and Dong, Hao and Zhou, Yuanchun and Xiong, Hui},
  journal={arXiv preprint arXiv:2309.10773},
  year={2023}
}

@inproceedings{fang2025benefits,
  title={On the benefits of attribute-driven graph domain adaptation},
  author={Fang, Ruiyi and Li, Bingheng and Zeng, Qiuhao and Hosseini Dashtbayaz, Nima and Pu, Ruizhi and Ling, Charles and Wang, Boyu and others},
  booktitle=ICLR,
  year={2025}
}

@article{wang2026usbd,
  title={Usbd: Universal structural basis distillation for source-free graph domain adaptation},
  author={Wang, Yingxu and Zhang, Kunyu and Wang, Mengzhu and Gao, Siyang and Yin, Nan},
  journal={arXiv preprint arXiv:2602.08431},
  year={2026}
}

@article{wang2024degree,
  title={Degree-Conscious Spiking Graph for Cross-Domain Adaptation},
  author={Wang, Yingxu and Wang, Mengzhu and Su, Houcheng and Yin, Nan and Yao, Quanming and Kwok, James},
  journal={arXiv preprint arXiv:2410.06883},
  year={2024}
}

@article{wang2026cross,
  title={Cross-Resolution Semantic Learning for Graph Domain Adaptation},
  author={Wang, Yingxu and Huang, Haoze and Zheng, Zhongkai and Liang, Shangsong},
  journal={arXiv preprint arXiv:2607.29365},
  year={2026}
}

@article{wang2026disrfm,
  title={DisRFM: Polar Riemannian Flow Matching for Structure-Preserving Graph Domain Adaptation},
  author={Wang, Yingxu and Liu, Xinwang and Wang, Mengzhu and Gao, Siyang and Yin, Nan},
  journal={arXiv preprint arXiv:2602.00656},
  year={2026}
}

@article{gao2025graph,
  title={Graph condensation: A survey},
  author={Gao, Xinyi and Yu, Junliang and Chen, Tong and Ye, Guanhua and Zhang, Wentao and Yin, Hongzhi},
  journal=TKDE,
  year={2025},
  publisher={IEEE}
}

@article{tian2025knowledge,
  title={Knowledge distillation on graphs: A survey},
  author={Tian, Yijun and Pei, Shichao and Zhang, Xiangliang and Zhang, Chuxu and Chawla, Nitesh V},
  journal={ACM Computing Surveys},
  year={2025},
  publisher={ACM New York, NY}
}

@inproceedings{hu2025large,
  title={Large language model meets graph neural network in knowledge distillation},
  author={Hu, Shengxiang and Zou, Guobing and Yang, Song and Lin, Shiyi and Gan, Yanglan and Zhang, Bofeng and Chen, Yixin},
  booktitle=AAAI,
  year={2025}
}

@inproceedings{yu2025samgpt,
  title={Samgpt: Text-free graph foundation model for multi-domain pre-training and cross-domain adaptation},
  author={Yu, Xingtong and Gong, Zechuan and Zhou, Chang and Fang, Yuan and Zhang, Hui},
  booktitle=WWW,
  year={2025}
}

@inproceedings{ngo2025higda,
  title={Higda: Hierarchical graph of nodes to learn local-to-global topology for semi-supervised domain adaptation},
  author={Ngo, Ba Hung and Bui, Doanh C and Do-Tran, Nhat-Tuong and Choi, Tae Jong},
  booktitle=AAAI,
  year={2025}
}

@inproceedings{you2023graph,
  title={Graph domain adaptation via theory-grounded spectral regularization},
  author={You, Yuning and Chen, Tianlong and Wang, Zhangyang and Shen, Yang},
  booktitle=ICLR,
  year={2023}
}

@article{cai2024graph,
  title={Graph domain adaptation: A generative view},
  author={Cai, Ruichu and Wu, Fengzhu and Li, Zijian and Wei, Pengfei and Yi, Lingling and Zhang, Kun},
  journal=TKDD,
  year={2024}
}

@article{dai2022graph,
  title={Graph transfer learning via adversarial domain adaptation with graph convolution},
  author={Dai, Quanyu and Wu, Xiao-Ming and Xiao, Jiaren and Shen, Xiao and Wang, Dan},
  journal=TKDE,
  year={2022},
  publisher={IEEE}
}

@article{zhang2024anchor,
  title={Anchor guided unsupervised domain adaptation},
  author={Zhang, Canyu and Nie, Feiping and Wang, Rong},
  journal=TKDE,
  year={2024},
  publisher={IEEE}
}

@article{zhang2026survey,
  title={A survey of deep graph learning under distribution shifts: from graph out-of-distribution generalization to adaptation},
  author={Zhang, Kexin and Liu, Shuhan and Wang, Song and Shi, Weili and Chen, Chen and Li, Pan and Li, Sheng and Li, Jundong and Ding, Kaize},
  journal=TKDD,
  year={2026}
}

@article{li2023sigma,
  title={Sigma++: Improved semantic-complete graph matching for domain adaptive object detection},
  author={Li, Wuyang and Liu, Xinyu and Yuan, Yixuan},
  journal=TPAMI,
  volume={45},
  number={7},
  pages={9022--9040},
  year={2023},
  publisher={IEEE}
}

@article{fang2025homophily,
  title={Homophily enhanced graph domain adaptation},
  author={Fang, Ruiyi and Li, Bingheng and Zhao, Jingyu and Pu, Ruizhi and Zeng, Qiuhao and Xu, Gezheng and Ling, Charles and Wang, Boyu},
  journal={arXiv preprint arXiv:2505.20089},
  year={2025}
}

@article{lei2025gradual,
  title={Gradual domain adaptation for graph learning},
  author={Lei, Pui Ieng and Chen, Ximing and Sheng, Yijun and Liu, Yanyan and Gong, Zhiguo and Yang, Qiang},
  journal={ACM Transactions on Intelligent Systems and Technology},
  year={2025},
  publisher={ACM New York, NY}
}

@article{zeng2025pave,
  title={Pave your own path: Graph gradual domain adaptation on fused gromov-wasserstein geodesics},
  author={Zeng, Zhichen and Qiu, Ruizhong and Bao, Wenxuan and Wei, Tianxin and Lin, Xiao and Yan, Yuchen and Abdelzaher, Tarek F and Han, Jiawei and Tong, Hanghang},
  journal={arXiv preprint arXiv:2505.12709},
  year={2025}
}

@inproceedings{wang2025correction,
  title={Correction to: Domain Graph-Structured Multi-source Domain Adaptation with Dual Integration},
  author={Wang, Jiayi and Zheng, Xin and Li, Yi and Guo, Yanqing},
  booktitle={International Conference on Advanced Data Mining and Applications},
  year={2025},
  organization={Springer}
}

@article{hadipour2025graphban,
  title={GraphBAN: An inductive graph-based approach for enhanced prediction of compound-protein interactions},
  author={Hadipour, Hamid and Li, Yan Yi and Sun, Yan and Deng, Chutong and Lac, Leann and Davis, Rebecca and Cardona, Silvia T and Hu, Pingzhao},
  journal={Nature Communications},
  year={2025},
  publisher={Nature Publishing Group UK London}
}

@inproceedings{yang2025graphlora,
  title={Graphlora: Structure-aware contrastive low-rank adaptation for cross-graph transfer learning},
  author={Yang, Zhe-Rui and Han, Jindong and Wang, Chang-Dong and Liu, Hao},
  booktitle=KDD,
  year={2025}
}

@inproceedings{wang2025bridging,
  title={Bridging source and target domains via link prediction for unsupervised domain adaptation on graphs},
  author={Wang, Yilong and Zhao, Tianxiang and Wu, Zongyu and Wang, Suhang},
  booktitle=WSDM,
  year={2025}
}

@inproceedings{hu2025automated,
  title={Automated design of agentic systems},
  author={Hu, Shengran and Lu, Cong and Clune, Jeff},
  booktitle=ICLR,
  year={2025}
}

@article{collaco2026role,
  title={The role of agentic artificial intelligence in healthcare: a scoping review},
  author={Collaco, Bernardo G and Haider, Syed Ali and Prabha, Srinivasagam and Gomez-Cabello, Cesar A and Genovese, Ariana and Wood, Nadia G and Bagaria, Sanjay P and Gopala, Narayanan and Tao, Cui and Forte, Antonio Jorge},
  journal={npj Digital Medicine},
  year={2026}
}

@article{wang2025protomol,
  title={Protomol: enhancing molecular property prediction via prototype-guided multimodal learning},
  author={Wang, Yingxu and Zhang, Kunyu and Huang, Jiaxin and Yin, Nan and Liu, Siwei and Segal, Eran},
  journal={Briefings in Bioinformatics},
  volume={26},
  number={6},
  pages={bbaf629},
  year={2025},
  publisher={Oxford University Press}
}

@article{wang2026sgac,
  title={SGAC: a graph neural network framework for imbalanced and structure-aware AMP classification},
  author={Wang, Yingxu and Liang, Victor and Yin, Nan and Liu, Siwei and Segal, Eran},
  journal={Briefings in Bioinformatics},
  volume={27},
  number={1},
  pages={bbag038},
  year={2026},
  publisher={Oxford University Press}
}

@article{wang2025dusego,
  title={Dusego: Dual second-order equivariant graph ordinary differential equation},
  author={Wang, Yingxu and Yin, Nan and Xiao, Mingyan and Yi, Xinhao and Liu, Siwei and Liang, Shangsong},
  journal={ACM Transactions on Knowledge Discovery from Data},
  volume={20},
  number={1},
  pages={1--18},
  year={2025},
  publisher={ACM New York, NY}
}

\appendix
\newpage
\section{Proof of Theorem~\ref{thm:generalization}}
\label{app:proof_generalization}

\textbf{Theorem \ref{thm:generalization} (Target-Risk Bound)}\textit{
Let $f$ be the graph encoder, $h$ be the classifier, and $g=h\circ f\in\mathcal{H}$ be the learned predictor. Let $\mathcal{R}_{T}(g)$ denote the expected target risk and $\widehat{\mathcal{R}}_{\mathcal{S}_S^{\rho}}(g)$ denote the empirical risk on the retained source subset $\mathcal{S}_S^{\rho}$ with $K$ samples. For each class $c$, let $\mathbf{a}_c$ be the learned reliability-aware source anchor and $\mathbf{a}_c^{\star}$ be the ideal class anchor estimated with sufficient source support. Define the retained-source prior $\pi_{S,c}=K_c/K$, the anchor estimation error $\epsilon_{\mathrm{anc}}=\sum_{c=1}^{C}\pi_{S,c}\|\mathbf{a}_c-\mathbf{a}_c^{\star}\|_2$, and the class-mass factor $\gamma_{\beta}=\sum_{c=1}^{C}\pi_{S,c}^{2}/\beta_c$, where $\beta_c>0$ is the detached target class-mass weight in $\mathcal{L}_{pa}$. Assume the loss $\ell$ is bounded in $[0,1]$ and $K_{\mathrm{Lip}}$-Lipschitz with respect to normalized graph representations. Then, with probability at least $1-\delta$, the target risk is bounded by
\begin{equation}
\label{eq:generalization_bound}
\begin{aligned}
\mathcal{R}_{T}(g)
\leq\;&
\widehat{\mathcal{R}}_{\mathcal{S}_S^{\rho}}(g)
+
K_{\mathrm{Lip}}\sqrt{\gamma_{\beta}\mathcal{L}_{pa}}
+
\sqrt{2\mathcal{D}_{prior}}
+
K_{\mathrm{Lip}}\epsilon_{\mathrm{anc}}
\\
&+
2\mathfrak{R}_{K}(\mathcal{H})
+
3\sqrt{\frac{\log(2/\delta)}{2K}}
+
\lambda^{\star},
\end{aligned}
\end{equation}
where $\mathfrak{R}_{K}(\mathcal{H})$ is the empirical Rademacher complexity of $\mathcal{H}$ under $K$ retained source samples, and $\lambda^{\star}$ denotes the minimal joint risk and residual mismatch between the retained source and target domains.}

\begin{proof}
Let $g=h\circ f$ denote the learned predictor. We use $\mathcal{P}_{S}^{\rho}$ to denote the retained-source distribution induced by $\mathcal{S}_{S}^{\rho}$, and $\mathcal{P}_{T}$ to denote the target distribution. The expected risks on the retained-source and target domains are denoted by $\mathcal{R}_{S}^{\rho}(g)$ and $\mathcal{R}_{T}(g)$, respectively.

Since the loss $\ell$ is bounded in $[0,1]$, the standard Rademacher generalization inequality implies that, with probability at least $1-\delta$,

\begin{equation}
\label{eq:appendix_source_generalization}
\mathcal{R}_{S}^{\rho}(g)
\leq
\widehat{\mathcal{R}}_{\mathcal{S}_{S}^{\rho}}(g)
+
2\mathfrak{R}_{K}(\mathcal{H})
+
3\sqrt{
\frac{\log(2/\delta)}{2K}
},
\end{equation}

where $\mathfrak{R}_{K}(\mathcal{H})$ is the empirical Rademacher complexity of $\mathcal{H}$ under $K$ retained source samples.

We next relate the retained-source risk to the target risk. Following the standard domain adaptation decomposition, the target risk can be upper bounded by the retained-source risk, a class-conditional representation discrepancy, a marginal-prior discrepancy, and an irreducible residual term:

\begin{equation}
\label{eq:appendix_da_decomposition}
\mathcal{R}_{T}(g)
\leq
\mathcal{R}_{S}^{\rho}(g)
+
\Delta_{\mathrm{cls}}(g)
+
\Delta_{\mathrm{prior}}(g)
+
\lambda^{\star},
\end{equation}

where $\lambda^{\star}$ denotes the minimal joint risk and residual mismatch between the retained-source and target domains.

We first bound the class-conditional discrepancy $\Delta_{\mathrm{cls}}(g)$. For each class $c$, let $\boldsymbol{\mu}_{T,c}$ denote the confidence-weighted target center, $\mathbf{a}_c$ denote the learned reliability-aware source anchor, and $\mathbf{a}_c^{\star}$ denote the ideal source anchor obtained with sufficient source support. Since the loss is $K_{\mathrm{Lip}}$-Lipschitz with respect to normalized graph representations, the discrepancy between the target class center and the ideal source class anchor satisfies

\begin{equation}
\label{eq:appendix_cls_lip}
\Delta_{\mathrm{cls}}(g)
\leq
K_{\mathrm{Lip}}
\sum_{c=1}^{C}
\pi_{S,c}
\left\|
\boldsymbol{\mu}_{T,c}
-
\mathbf{a}_c^{\star}
\right\|_2.
\end{equation}

By the triangle inequality, for every class $c$,

\begin{equation}
\label{eq:appendix_triangle}
\left\|
\boldsymbol{\mu}_{T,c}
-
\mathbf{a}_c^{\star}
\right\|_2
\leq
\left\|
\boldsymbol{\mu}_{T,c}
-
\mathbf{a}_c
\right\|_2
+
\left\|
\mathbf{a}_c
-
\mathbf{a}_c^{\star}
\right\|_2.
\end{equation}

Substituting Eq.~\eqref{eq:appendix_triangle} into Eq.~\eqref{eq:appendix_cls_lip} yields

\begin{equation}
\label{eq:appendix_cls_split}
\begin{aligned}
\Delta_{\mathrm{cls}}(g)
\leq\;&
K_{\mathrm{Lip}}
\sum_{c=1}^{C}
\pi_{S,c}
\left\|
\boldsymbol{\mu}_{T,c}
-
\mathbf{a}_c
\right\|_2
\\
&+
K_{\mathrm{Lip}}
\sum_{c=1}^{C}
\pi_{S,c}
\left\|
\mathbf{a}_c
-
\mathbf{a}_c^{\star}
\right\|_2.
\end{aligned}
\end{equation}

By the definition of the weighted anchor estimation error,

\begin{equation}
\label{eq:appendix_anchor_error}
\epsilon_{\mathrm{anc}}
=
\sum_{c=1}^{C}
\pi_{S,c}
\left\|
\mathbf{a}_c
-
\mathbf{a}_c^{\star}
\right\|_2,
\end{equation}

we have

\begin{equation}
\label{eq:appendix_cls_with_anchor}
\Delta_{\mathrm{cls}}(g)
\leq
K_{\mathrm{Lip}}
\sum_{c=1}^{C}
\pi_{S,c}
\left\|
\boldsymbol{\mu}_{T,c}
-
\mathbf{a}_c
\right\|_2
+
K_{\mathrm{Lip}}
\epsilon_{\mathrm{anc}}.
\end{equation}

It remains to relate the first term in Eq.~\eqref{eq:appendix_cls_with_anchor} to the prototype alignment loss. Let

\begin{equation}
\label{eq:appendix_dc}
d_c
=
\left\|
\boldsymbol{\mu}_{T,c}
-
\mathbf{a}_c
\right\|_2.
\end{equation}

The prototype alignment loss is

\begin{equation}
\label{eq:appendix_lpa}
\mathcal{L}_{pa}
=
\sum_{c=1}^{C}
\beta_c d_c^2.
\end{equation}

Assuming $\beta_c>0$ for all classes involved in alignment, we apply the Cauchy--Schwarz inequality:

\begin{equation}
\label{eq:appendix_cs_derivation}
\begin{aligned}
\sum_{c=1}^{C}
\pi_{S,c}d_c
&=
\sum_{c=1}^{C}
\frac{\pi_{S,c}}{\sqrt{\beta_c}}
\sqrt{\beta_c}d_c
\\
&\leq
\left(
\sum_{c=1}^{C}
\frac{\pi_{S,c}^{2}}{\beta_c}
\right)^{1/2}
\left(
\sum_{c=1}^{C}
\beta_c d_c^2
\right)^{1/2}
\\
&=
\sqrt{
\gamma_{\beta}
\mathcal{L}_{pa}
},
\end{aligned}
\end{equation}

where

\begin{equation}
\label{eq:appendix_gamma}
\gamma_{\beta}
=
\sum_{c=1}^{C}
\frac{\pi_{S,c}^{2}}{\beta_c}.
\end{equation}

Combining Eq.~\eqref{eq:appendix_cls_with_anchor} and Eq.~\eqref{eq:appendix_cs_derivation}, we obtain

\begin{equation}
\label{eq:appendix_cls_final}
\Delta_{\mathrm{cls}}(g)
\leq
K_{\mathrm{Lip}}
\sqrt{
\gamma_{\beta}\mathcal{L}_{pa}
}
+
K_{\mathrm{Lip}}
\epsilon_{\mathrm{anc}}.
\end{equation}

We then bound the marginal-prior discrepancy. Recall that the retained source prior is $\pi_S$ and the target marginal prediction is $\bar{p}_T$. The prior matching term in \method{} is

\begin{equation}
\label{eq:appendix_dprior}
\mathcal{D}_{prior}
=
\mathrm{KL}
\left(
\pi_S
\Vert
\bar{p}_T
\right).
\end{equation}

By Pinsker's inequality,

\begin{equation}
\label{eq:appendix_pinsker}
\left\|
\pi_S
-
\bar{p}_T
\right\|_1
\leq
\sqrt{
2
\mathrm{KL}
\left(
\pi_S
\Vert
\bar{p}_T
\right)
}
=
\sqrt{
2\mathcal{D}_{prior}
}.
\end{equation}

Therefore, the prior discrepancy satisfies

\begin{equation}
\label{eq:appendix_prior_bound}
\Delta_{\mathrm{prior}}(g)
\leq
\sqrt{
2\mathcal{D}_{prior}
}.
\end{equation}

Substituting Eq.~\eqref{eq:appendix_source_generalization}, Eq.~\eqref{eq:appendix_cls_final}, and Eq.~\eqref{eq:appendix_prior_bound} into Eq.~\eqref{eq:appendix_da_decomposition}, we obtain

\begin{equation}
\label{eq:appendix_final_bound}
\begin{aligned}
\mathcal{R}_{T}(g)
\leq\;&
\widehat{\mathcal{R}}_{\mathcal{S}_{S}^{\rho}}(g)
+
K_{\mathrm{Lip}}
\sqrt{
\gamma_{\beta}
\mathcal{L}_{pa}
}
+
\sqrt{
2\mathcal{D}_{prior}
}
+
K_{\mathrm{Lip}}
\epsilon_{\mathrm{anc}}
\\
&+
2\mathfrak{R}_{K}(\mathcal{H})
+
3\sqrt{
\frac{\log(2/\delta)}{2K}
}
+
\lambda^{\star}.
\end{aligned}
\end{equation}
\end{proof}

\begin{table}[t] \centering \caption{Statistics of the experimental datasets.} \vspace{0.2cm} \setlength{\tabcolsep}{3pt} \begin{tabular}{lcccc} \toprule Datasets & Graphs & Avg. Nodes & Avg. Edges & Classes \\ \midrule NCI1 & 4,110 & 29.87 & 32.30 & 2 \\ Mutagenicity & 4,337 & 30.32 & 30.77 & 2 \\ \midrule DD & 1,178 & 284.32 & 715.66 & 2 \\ PROTEINS & 1,113 & 39.1 & 72.8 & 2 \\ COX2 & 467 & 41.22 & 43.45& 2 \\ COX2\_MD & 303 & 26.28 & 335.12 & 2\\ BZR & 405 & 35.75 & 38.36 & 2 \\ BZR\_MD & 306 & 21.30 & 225.06 & 2 \\ \bottomrule \end{tabular} \label{tab:dataset} \end{table}

\section{Datasets}\label{sec:dataset}

\section{More Results}\label{appendix:results}
\begin{table}[htbp]
\centering
\caption{The classification results (in \%) on the Mutagenicity dataset under node density domain shift (source → target). M0, M1, M2, and M3 denote the sub-datasets partitioned with node density. Bold results indicate the best performance.}
\vspace{0.2cm}
\resizebox{\textwidth}{!}{%
\begin{tabular}{lcccccccccccc}
\toprule
Method  & M0$\rightarrow$M1 & M0$\rightarrow$M2 & M0$\rightarrow$M3 & M1$\rightarrow$M0 & M1$\rightarrow$M2 & M1$\rightarrow$M3 & M2$\rightarrow$M0 & M2$\rightarrow$M1 & M2$\rightarrow$M3 & M3$\rightarrow$M0 & M3$\rightarrow$M1 & M3$\rightarrow$M2 \\
\midrule
GCN     & 53.2$\pm$4.1 & 49.5$\pm$3.6 & 50.1$\pm$4.5 & 46.5$\pm$2.4 & 58.2$\pm$2.1 & 45.7$\pm$1.6 & 46.1$\pm$1.8 & 65.3$\pm$2.5 & 45.1$\pm$1.9 & 48.6$\pm$2.8 & 41.2$\pm$6.2  & 43.5$\pm$5.1 \\
GIN     & 51.4$\pm$5.2 & 47.8$\pm$4.4 & 49.6$\pm$3.8 & 45.1$\pm$2.9 & 56.8$\pm$2.5 & 44.3$\pm$1.9 & 44.8$\pm$2.1 & 63.9$\pm$3.1 & 43.7$\pm$2.2 & 47.1$\pm$3.1 & 39.5$\pm$5.8  & 41.8$\pm$6.5 \\
GMT     & 55.1$\pm$3.8 & 51.2$\pm$3.2 & 51.5$\pm$2.9 & 47.8$\pm$2.1 & 59.4$\pm$1.8 & 46.8$\pm$1.4 & 47.3$\pm$1.5 & 66.8$\pm$2.2 & 46.3$\pm$1.6 & 49.5$\pm$2.4 & 42.6$\pm$5.5  & 44.7$\pm$4.8 \\
CIN     & 54.3$\pm$4.5 & 50.6$\pm$3.7 & 50.9$\pm$3.4 & 47.2$\pm$2.3 & 58.9$\pm$2.0 & 46.2$\pm$1.5 & 46.7$\pm$1.7 & 66.1$\pm$2.4 & 45.8$\pm$1.8 & 48.9$\pm$2.9 & 41.9$\pm$6.0  & 43.9$\pm$5.6 \\
PathNN  & 52.8$\pm$4.8 & 48.9$\pm$4.1 & 50.4$\pm$3.6 & 45.8$\pm$2.7 & 57.5$\pm$2.4 & 45.1$\pm$1.8 & 45.4$\pm$2.0 & 64.5$\pm$2.8 & 44.6$\pm$2.1 & 47.8$\pm$3.0 & 40.7$\pm$5.9  & 42.6$\pm$6.1 \\
\midrule
DEAL    & 72.1$\pm$2.1 & 64.8$\pm$2.4 & \underline{56.1}$\pm$1.2 & 67.8$\pm$2.2 & 68.0$\pm$3.1 & 58.5$\pm$2.8 & \underline{69.1}$\pm$2.9 & 72.4$\pm$1.3 & 61.2$\pm$2.2 & \underline{59.4}$\pm$3.9 & \underline{52.5}$\pm$2.3  & \underline{65.4}$\pm$4.1 \\
SGDA    & 54.7$\pm$4.6 & 50.2$\pm$3.7 & 50.8$\pm$4.1 & 47.4$\pm$1.5 & 59.8$\pm$2.1 & 46.9$\pm$1.2 & 47.6$\pm$1.8 & 67.2$\pm$2.4 & 46.8$\pm$1.6 & 50.4$\pm$3.2 & 43.4$\pm$9.8  & 44.2$\pm$9.3 \\
A2GNN   & 42.5$\pm$7.1 & 46.1$\pm$6.5 & 50.6$\pm$3.8 & 47.7$\pm$2.3 & 60.1$\pm$1.7 & 46.6$\pm$2.0 & 47.3$\pm$1.4 & 67.5$\pm$1.9 & 46.5$\pm$2.2 & 51.3$\pm$2.6 & 44.6$\pm$7.6  & 43.8$\pm$9.5 \\
StruRW  & 74.3$\pm$1.7 & 66.8$\pm$1.1 & 53.2$\pm$1.8 & 49.6$\pm$4.1 & 62.2$\pm$3.9 & 49.5$\pm$2.7 & 59.4$\pm$6.2 & 72.1$\pm$3.5 & 50.6$\pm$2.0 & 50.2$\pm$5.6 & 40.8$\pm$7.3  & 49.3$\pm$9.1 \\
PA-BOTH & 72.9$\pm$0.8 & 65.4$\pm$2.3 & 52.1$\pm$0.9 & 52.8$\pm$9.2 & 62.5$\pm$3.1 & 48.8$\pm$1.4 & 64.7$\pm$4.6 & 71.2$\pm$2.8 & 54.1$\pm$3.8 & 53.5$\pm$2.2 & 39.6$\pm$8.4  & 49.7$\pm$11.8\\
GAA     & \underline{75.8}$\pm$1.4 & \underline{68.9}$\pm$0.9 & 55.5$\pm$1.1 & \underline{69.8}$\pm$1.2 & \underline{73.2}$\pm$1.5 & \underline{59.4}$\pm$1.3 & \textbf{69.3}$\pm$2.2 & \textbf{79.5}$\pm$1.8 & \underline{62.1}$\pm$2.7 & 54.7$\pm$4.3 & 50.9$\pm$5.1  & 63.9$\pm$2.5 \\
TDSS    & 43.8$\pm$6.6 & 50.1$\pm$7.1 & 50.9$\pm$3.2 & 47.6$\pm$1.9 & 59.7$\pm$2.4 & 46.8$\pm$1.5 & 47.4$\pm$2.1 & 67.4$\pm$2.7 & 46.9$\pm$1.8 & 51.7$\pm$2.0 & 41.8$\pm$15.6 & 44.1$\pm$8.4 \\
\midrule
\method{}    & \textbf{76.9}$\pm$2.2 & \textbf{69.7}$\pm$1.2 & \textbf{63.4}$\pm$2.3 & \textbf{71.8}$\pm$2.1 & \textbf{73.7}$\pm$1.8 & \textbf{65.9}$\pm$1.4 & \underline{69.1}$\pm$1.7 & \underline{79.2}$\pm$2.4 & \textbf{67.5}$\pm$3.5 & \textbf{65.0}$\pm$5.1 & \textbf{66.8}$\pm$10.8 & \textbf{74.0}$\pm$4.1 \\
\bottomrule
\end{tabular}%
}
\label{tab:mutag_node}
\end{table}
\begin{table}[htbp]
\centering
\caption{The classification results (in \%) on the Mutagenicity dataset under edge density domain shift (source → target). M0, M1, M2, and M3 denote the sub-datasets partitioned with edge density. Bold results indicate the best performance.}
\vspace{0.2cm}
\resizebox{\textwidth}{!}{%
\begin{tabular}{lcccccccccccc}
\toprule
Method  & M0$\rightarrow$M1 & M0$\rightarrow$M2 & M0$\rightarrow$M3 & M1$\rightarrow$M0 & M1$\rightarrow$M2 & M1$\rightarrow$M3 & M2$\rightarrow$M0 & M2$\rightarrow$M1 & M2$\rightarrow$M3 & M3$\rightarrow$M0 & M3$\rightarrow$M1 & M3$\rightarrow$M2 \\
\midrule
GCN     & 61.5$\pm$5.4 & 53.2$\pm$4.8 & 47.3$\pm$3.2 & 50.1$\pm$2.5 & 55.4$\pm$2.8 & 45.3$\pm$1.9 & 50.5$\pm$2.2 & 64.2$\pm$3.5 & 45.0$\pm$1.7 & 47.9$\pm$2.5 & 33.8$\pm$2.7 & 42.1$\pm$4.6 \\
GIN     & 59.8$\pm$6.1 & 51.6$\pm$5.2 & 46.5$\pm$4.1 & 48.7$\pm$3.1 & 54.1$\pm$3.4 & 44.2$\pm$2.1 & 49.3$\pm$2.5 & 62.8$\pm$4.1 & 43.8$\pm$2.0 & 46.2$\pm$3.3 & 32.6$\pm$3.5 & 41.5$\pm$5.2 \\
GMT     & 62.4$\pm$4.5 & 54.5$\pm$3.9 & 48.2$\pm$2.8 & 51.5$\pm$2.2 & 56.6$\pm$2.4 & 46.4$\pm$1.5 & 51.8$\pm$1.8 & 65.5$\pm$3.0 & 46.1$\pm$1.4 & 48.7$\pm$2.1 & 34.5$\pm$2.9 & 43.8$\pm$3.7 \\
CIN     & 61.9$\pm$5.0 & 53.8$\pm$4.2 & 47.8$\pm$3.1 & 50.8$\pm$2.4 & 55.9$\pm$2.7 & 45.8$\pm$1.7 & 51.1$\pm$2.1 & 64.8$\pm$3.3 & 45.5$\pm$1.6 & 48.1$\pm$2.4 & 34.1$\pm$3.1 & 42.7$\pm$4.1 \\
PathNN  & 60.5$\pm$5.6 & 52.4$\pm$4.9 & 46.9$\pm$3.7 & 49.3$\pm$2.8 & 54.8$\pm$3.1 & 44.9$\pm$2.0 & 49.8$\pm$2.4 & 63.5$\pm$3.8 & 44.5$\pm$1.9 & 47.3$\pm$2.8 & 33.2$\pm$3.2 & 41.9$\pm$4.8 \\
\midrule
DEAL    & 72.8$\pm$0.7 & 62.5$\pm$2.1 & 54.2$\pm$1.4 & 67.5$\pm$2.8 & 64.2$\pm$4.9 & 58.6$\pm$2.1 & 68.9$\pm$1.8 & 66.4$\pm$3.5 & 60.1$\pm$4.1 & \underline{57.8}$\pm$1.7 & 55.7$\pm$6.2  & \underline{66.1}$\pm$2.5 \\
SGDA    & 63.3$\pm$6.9 & 54.1$\pm$5.7 & 47.9$\pm$4.1 & 51.5$\pm$1.2 & 56.9$\pm$0.8 & 46.5$\pm$1.4 & 51.3$\pm$0.9 & 51.0$\pm$9.5 & 47.7$\pm$2.8 & 49.8$\pm$2.1 & \underline{56.5}$\pm$6.4  & 45.9$\pm$5.2 \\
A2GNN   & 60.6$\pm$8.7 & 54.4$\pm$7.2 & 49.1$\pm$3.5 & 51.2$\pm$2.1 & 57.1$\pm$1.5 & 46.3$\pm$2.4 & 51.6$\pm$1.7 & 66.5$\pm$3.1 & 46.6$\pm$1.2 & 48.7$\pm$2.3 & 33.4$\pm$1.8  & 42.9$\pm$2.6 \\
StruRW  & \underline{73.1}$\pm$0.9 & 63.7$\pm$1.6 & 50.8$\pm$2.1 & 51.8$\pm$1.1 & 58.9$\pm$3.4 & 48.2$\pm$2.8 & 69.9$\pm$1.2 & 72.8$\pm$1.4 & 52.5$\pm$0.8 & 48.8$\pm$5.4 & 47.9$\pm$9.8  & 52.6$\pm$3.7 \\
PA-BOTH & 72.4$\pm$1.1 & 53.9$\pm$6.2 & 52.5$\pm$1.4 & 53.3$\pm$3.1 & 59.2$\pm$3.8 & 47.8$\pm$2.0 & 70.6$\pm$0.7 & 71.4$\pm$0.9 & 56.1$\pm$1.6 & 48.6$\pm$2.2 & 38.3$\pm$6.3  & 51.8$\pm$8.7 \\
GAA     & 72.9$\pm$1.2 & \underline{65.7}$\pm$1.1 & \underline{55.5}$\pm$2.3 & \underline{70.9}$\pm$1.8 & \underline{72.3}$\pm$1.4 & \underline{61.2}$\pm$1.7 & \underline{71.6}$\pm$0.9 & \underline{76.8}$\pm$1.3 & \underline{60.4}$\pm$2.1 & 51.4$\pm$1.6 & 52.1$\pm$5.3  & 64.9$\pm$2.4 \\
TDSS    & 62.6$\pm$8.5 & 54.0$\pm$5.9 & 49.4$\pm$4.6 & 51.5$\pm$1.8 & 56.8$\pm$2.2 & 46.5$\pm$1.1 & 51.2$\pm$2.4 & 66.8$\pm$2.7 & 46.2$\pm$1.5 & 48.5$\pm$1.9 & 33.2$\pm$2.1  & 43.1$\pm$1.4 \\
\midrule
\method{}     & \textbf{74.8}$\pm$1.2 & \textbf{67.1}$\pm$1.6 & \textbf{65.6}$\pm$2.3 & \textbf{72.5}$\pm$1.3 & \textbf{75.7}$\pm$2.3 & \textbf{65.3}$\pm$2.0 & \textbf{72.3}$\pm$3.0 & \textbf{78.0}$\pm$2.1 & \textbf{69.5}$\pm$1.2 & \textbf{65.8}$\pm$6.1 & \textbf{64.5}$\pm$8.4 & \textbf{72.6}$\pm$2.9 \\
\bottomrule
\end{tabular}%
}
\end{table}
\begin{table}[htbp]
\centering
\caption{The classification results (in \%) on the NCI1 dataset under node density domain shift (source → target). N0, N1, N2,
and N3 denote the sub-datasets partitioned with node density. Bold results indicate the best performance.}
\vspace{0.2cm}
\resizebox{\textwidth}{!}{%
\begin{tabular}{lcccccccccccc}
\toprule
Method  & N0$\rightarrow$N1 & N0$\rightarrow$N2 & N0$\rightarrow$N3 & N1$\rightarrow$N0 & N1$\rightarrow$N2 & N1$\rightarrow$N3 & N2$\rightarrow$N0 & N2$\rightarrow$N1 & N2$\rightarrow$N3 & N3$\rightarrow$N0 & N3$\rightarrow$N1 & N3$\rightarrow$N2 \\
\midrule
GCN     & 49.5$\pm$2.1 & 42.2$\pm$1.8 & 32.1$\pm$2.4 & 62.1$\pm$15.5 & 41.2$\pm$1.8 & 38.5$\pm$11.2 & 25.5$\pm$1.4 & 47.5$\pm$1.5 & 61.2$\pm$1.1 & 26.5$\pm$1.5 & 48.3$\pm$1.7  & 56.1$\pm$2.2 \\
GIN     & 50.1$\pm$1.9 & 42.4$\pm$2.0 & 31.8$\pm$1.9 & 62.6$\pm$16.1 & 41.6$\pm$2.1 & 38.9$\pm$11.8 & 25.8$\pm$1.6 & 47.9$\pm$1.7 & 61.6$\pm$1.3 & 26.8$\pm$1.3 & 48.8$\pm$2.1  & 55.8$\pm$2.4 \\
GMT     & 51.2$\pm$2.4 & 43.1$\pm$2.5 & 33.0$\pm$2.8 & 63.8$\pm$14.8 & 42.5$\pm$1.5 & 39.8$\pm$10.5 & 26.4$\pm$1.2 & 48.8$\pm$1.3 & 62.5$\pm$0.9 & 27.0$\pm$1.6 & 49.1$\pm$1.8  & 56.5$\pm$1.9 \\
CIN     & 50.6$\pm$2.1 & 42.8$\pm$2.3 & 32.5$\pm$2.2 & 63.2$\pm$15.3 & 41.9$\pm$1.7 & 39.2$\pm$11.0 & 26.1$\pm$1.3 & 48.2$\pm$1.4 & 61.9$\pm$1.0 & 26.9$\pm$1.8 & 49.0$\pm$2.0  & 56.2$\pm$2.1 \\
PathNN  & 51.5$\pm$1.8 & 43.5$\pm$1.9 & 33.2$\pm$2.0 & 63.5$\pm$15.0 & 42.2$\pm$1.9 & 39.5$\pm$10.8 & 26.3$\pm$1.5 & 48.5$\pm$1.6 & 62.2$\pm$1.2 & 27.1$\pm$1.2 & 49.4$\pm$1.5  & 56.8$\pm$1.7 \\
\midrule
DEAL    & 53.2$\pm$1.8 & 46.1$\pm$3.6 & 38.8$\pm$6.2 & 71.1$\pm$1.5 & 54.9$\pm$3.5 & 48.4$\pm$2.4 & 40.2$\pm$12.5& \underline{50.3}$\pm$2.8 & \underline{66.5}$\pm$0.9 & 27.2$\pm$1.1 & 49.3$\pm$1.2  & 57.2$\pm$1.8 \\
SGDA    & 50.8$\pm$1.4 & 42.6$\pm$2.1 & 33.4$\pm$1.5 & 63.9$\pm$18.4& 42.8$\pm$1.9 & 40.2$\pm$13.2& 27.1$\pm$1.2 & 49.0$\pm$1.7 & 66.4$\pm$1.1 & 27.1$\pm$1.8 & 49.2$\pm$2.0  & 57.4$\pm$1.5 \\
A2GNN   & 51.0$\pm$1.9 & 42.8$\pm$1.7 & 33.6$\pm$2.1 & 72.9$\pm$1.8 & 42.6$\pm$2.4 & 39.9$\pm$15.1& 36.1$\pm$19.2& 49.2$\pm$1.4 & 62.6$\pm$1.3 & 26.9$\pm$1.5 & 49.0$\pm$1.8  & 57.2$\pm$2.2 \\
StruRW  & 50.9$\pm$1.5 & 42.7$\pm$1.8 & 33.5$\pm$1.2 & 72.8$\pm$1.1 & 53.2$\pm$2.4 & 42.8$\pm$5.2 & 26.9$\pm$1.7 & 49.1$\pm$1.5 & 66.4$\pm$0.8 & 27.1$\pm$1.4 & 49.1$\pm$1.3  & 57.4$\pm$1.9 \\
PA-BOTH & 50.8$\pm$1.2 & 42.9$\pm$1.5 & 33.4$\pm$1.7 & 73.1$\pm$0.9 & 53.7$\pm$2.8 & 45.8$\pm$7.6 & 27.2$\pm$1.4 & 49.0$\pm$1.9 & \textbf{66.6}$\pm$1.0 & 26.8$\pm$1.9 & 49.2$\pm$1.5  & 57.3$\pm$1.6 \\
GAA     & \underline{61.2}$\pm$2.5 & \underline{57.0}$\pm$2.6 & \underline{52.1}$\pm$4.1 & \underline{73.7}$\pm$2.2 & \underline{65.8}$\pm$1.8 & \textbf{60.8}$\pm$3.4 & \underline{60.1}$\pm$3.2 & \textbf{63.4}$\pm$2.6 & 66.2$\pm$1.8 & \underline{52.2}$\pm$11.8& \underline{57.4}$\pm$4.1  & \underline{62.8}$\pm$3.3 \\
TDSS    & 50.9$\pm$2.1 & 42.8$\pm$1.4 & 33.6$\pm$1.9 & 72.9$\pm$1.5 & 42.7$\pm$1.8 & 40.2$\pm$13.8& 36.3$\pm$18.5& 49.1$\pm$1.2 & \underline{66.5}$\pm$1.4 & 27.0$\pm$1.6 & 49.0$\pm$2.1  & 57.2$\pm$1.7 \\
\midrule
\method{}     & \textbf{65.7}$\pm$1.0 & \textbf{63.5}$\pm$1.9 & \textbf{61.0}$\pm$0.8 & \textbf{76.5}$\pm$0.4 & \textbf{66.7}$\pm$1.8 & \underline{60.1}$\pm$2.5 & \textbf{67.1}$\pm$19.7& \textbf{63.4}$\pm$2.9 & 63.5$\pm$3.3 & \textbf{52.9}$\pm$5.4 & \textbf{58.1}$\pm$2.1 & \textbf{63.9}$\pm$2.8 \\
\bottomrule
\end{tabular}%
}
\end{table}
\begin{table}[htbp]
\centering
\caption{The classification results (in \%) on the NCI1 dataset under edge density domain shift (source → target). N0, N1, N2,
and N3 denote the sub-datasets partitioned with edge density. \textbf{Bold} results indicate the best performance.}
\vspace{0.2cm}
\resizebox{\textwidth}{!}{%
\begin{tabular}{lcccccccccccc}
\toprule
Method  & N0$\rightarrow$N1 & N0$\rightarrow$N2 & N0$\rightarrow$N3 & N1$\rightarrow$N0 & N1$\rightarrow$N2 & N1$\rightarrow$N3 & N2$\rightarrow$N0 & N2$\rightarrow$N1 & N2$\rightarrow$N3 & N3$\rightarrow$N0 & N3$\rightarrow$N1 & N3$\rightarrow$N2 \\
\midrule
GCN     & 47.8$\pm$1.9 & 45.3$\pm$2.1 & 30.8$\pm$1.6 & 43.1$\pm$15.2 & 48.2$\pm$2.4 & 51.5$\pm$15.2 & 34.2$\pm$16.5 & 49.5$\pm$1.5 & 58.7$\pm$8.5 & 26.8$\pm$1.4 & 49.5$\pm$1.8  & 51.9$\pm$2.0 \\
GIN     & 48.2$\pm$1.7 & 46.1$\pm$1.5 & 31.2$\pm$2.1 & 43.5$\pm$16.1 & 48.7$\pm$2.7 & 52.1$\pm$16.1 & 34.8$\pm$17.1 & 50.1$\pm$1.7 & 59.3$\pm$9.1 & 27.1$\pm$1.8 & 50.2$\pm$2.2  & 52.4$\pm$1.6 \\
GMT     & 49.9$\pm$2.2 & 46.5$\pm$2.4 & 31.5$\pm$1.9 & 44.8$\pm$14.8 & 49.5$\pm$2.1 & 53.0$\pm$14.8 & 35.6$\pm$15.8 & 51.0$\pm$1.3 & 60.5$\pm$7.8 & 27.2$\pm$1.5 & 50.8$\pm$1.7  & 52.8$\pm$2.1 \\
CIN     & 48.5$\pm$1.8 & 46.2$\pm$2.0 & 31.3$\pm$1.7 & 44.2$\pm$15.3 & 48.9$\pm$2.3 & 52.4$\pm$15.3 & 35.1$\pm$16.2 & 50.4$\pm$1.4 & 59.9$\pm$8.2 & 27.0$\pm$1.7 & 50.5$\pm$1.6  & 52.6$\pm$1.9 \\
PathNN  & 49.1$\pm$1.5 & 46.8$\pm$1.8 & 31.7$\pm$1.4 & 44.6$\pm$15.0 & 49.2$\pm$2.5 & 52.8$\pm$15.0 & 35.4$\pm$16.0 & 50.7$\pm$1.6 & 60.2$\pm$8.0 & 27.3$\pm$1.3 & 50.6$\pm$1.7  & 53.0$\pm$1.5 \\
\midrule
DEAL    & 52.1$\pm$2.7 & 49.2$\pm$1.8 & 36.7$\pm$6.4 & 70.8$\pm$2.5 & 53.8$\pm$2.1 & 54.7$\pm$7.2 & 42.8$\pm$13.5& 54.1$\pm$3.4 & 64.1$\pm$4.8 & 27.5$\pm$1.2 & 51.2$\pm$1.1  & 53.1$\pm$1.4 \\
SGDA    & 48.9$\pm$1.5 & 46.7$\pm$1.2 & 31.7$\pm$1.8 & 45.4$\pm$22.5& 49.5$\pm$3.1 & 53.5$\pm$18.2& 45.4$\pm$22.1& 51.1$\pm$1.4 & 60.8$\pm$14.8& 27.3$\pm$1.6 & 50.9$\pm$1.8  & 53.3$\pm$1.1 \\
A2GNN   & 49.1$\pm$1.8 & 46.9$\pm$1.6 & 31.9$\pm$1.4 & 45.7$\pm$23.1& 50.5$\pm$3.8 & 53.7$\pm$19.5& 36.5$\pm$18.7& 50.9$\pm$1.7 & 65.1$\pm$1.2 & 27.5$\pm$1.5 & 51.1$\pm$1.5  & 53.1$\pm$1.6 \\
StruRW  & 48.9$\pm$1.2 & 46.8$\pm$1.9 & 31.7$\pm$1.5 & 70.1$\pm$3.2 & 50.8$\pm$1.1 & 52.2$\pm$6.8 & 29.5$\pm$3.2 & 52.6$\pm$1.4 & 63.4$\pm$4.7 & 27.3$\pm$1.4 & 50.9$\pm$1.3  & 53.2$\pm$1.8 \\
PA-BOTH & 49.0$\pm$1.4 & 46.7$\pm$1.5 & 31.9$\pm$1.1 & 72.1$\pm$2.2 & 51.7$\pm$1.6 & 57.6$\pm$6.5 & 29.1$\pm$2.8 & 52.3$\pm$1.2 & 63.2$\pm$5.3 & 27.4$\pm$1.1 & 51.0$\pm$1.6  & 53.1$\pm$1.5 \\
GAA     & \underline{62.2}$\pm$2.6 & \underline{56.9}$\pm$2.9 & \underline{50.6}$\pm$3.5 & \underline{75.1}$\pm$3.8 & \underline{63.8}$\pm$2.7 & \textbf{63.2}$\pm$3.3 & \underline{63.8}$\pm$4.4 & \underline{64.5}$\pm$1.5 & 64.7$\pm$1.2 & \underline{39.6}$\pm$10.5& \underline{55.1}$\pm$3.2  & \underline{57.6}$\pm$2.4 \\
TDSS    & 49.1$\pm$1.7 & 46.9$\pm$1.4 & 31.8$\pm$1.9 & 45.5$\pm$22.8& 50.7$\pm$3.3 & 53.8$\pm$18.9& 36.3$\pm$19.4& 51.1$\pm$1.3 & \textbf{68.3}$\pm$1.5 & 27.4$\pm$1.8 & 51.0$\pm$1.2  & 53.3$\pm$1.9 \\
\midrule
\method{}    & \textbf{65.1}$\pm$3.0 & \textbf{60.9}$\pm$1.9 & \textbf{60.9}$\pm$1.8 & \textbf{76.9}$\pm$1.2 & \textbf{66.2}$\pm$1.5 & \underline{62.1}$\pm$2.2 & \textbf{66.3}$\pm$17.3& \textbf{65.5}$\pm$1.8 & \underline{68.1}$\pm$3.5 & \textbf{43.3}$\pm$16.4& \textbf{55.7}$\pm$1.6  & \textbf{60.1}$\pm$1.8 \\
\bottomrule
\end{tabular}%
}
\label{tab:nci1_idx}
\end{table}

\end{document}